\documentclass[a4paper,fleqn]{cas-dc}

\usepackage[numbers]{natbib}
\usepackage{url}

\usepackage{mathtools}
\usepackage{amsthm}
\usepackage{bm}
\usepackage{siunitx}
\usepackage{pifont}

\usepackage{tabularx}
\usepackage{adjustbox}
\usepackage{threeparttable}
\usepackage{caption}
\usepackage{subcaption}
\usepackage{float}
\usepackage{wrapfig}

\usepackage[utf8]{inputenc}
\usepackage{soul}
\usepackage[normalem]{ulem}
\usepackage{enumitem}

\usepackage{newunicodechar}
\newunicodechar{≥}{\geq}

\usepackage{listings}
\usepackage{algorithm}
\usepackage{algpseudocode}

\lstdefinestyle{mystyle}{
    backgroundcolor=\color{white},
    commentstyle=\color{black},
    keywordstyle=\color{teal},
    numberstyle=\tiny\color{gray},
    stringstyle=\color{BrickRed},
    basicstyle=\ttfamily\footnotesize,
    breaklines=true,
    numbers=none,
    frame=single,
    showspaces=false,
    showstringspaces=false,
    showtabs=false,
    tabsize=2,
    morekeywords={Task, Decomposition, KPM, Scores}
}

\lstdefinestyle{verifycode}{
  language=Python,
  basicstyle=\linespread{0.80}\ttfamily\bfseries\footnotesize,
  keywordstyle=\color{blue!99!black}\bfseries,
  commentstyle=\color{gray!99},
  showstringspaces=false,
  breaklines=true,
  breakatwhitespace=true,
  tabsize=2,
  numbers=none,
  frame=none,
  backgroundcolor=\color{white},
  aboveskip=0.1em, belowskip=0.1em,
  xleftmargin=0em, xrightmargin=0em
}

\usepackage{tikz}
\usetikzlibrary{arrows.meta, positioning, shapes.geometric, shapes.misc, fit, calc, backgrounds, decorations.pathreplacing, patterns}
\usepackage{pgfplots}
\pgfplotsset{compat=1.18}
\usepgfplotslibrary{groupplots}
\usepackage[title]{appendix}
\usepackage{framed}

\theoremstyle{definition}
\newtheorem{definition}{Definition}[section]
\newtheorem{theorem}{Theorem}[section]
\newtheorem{proposition}[theorem]{Proposition}
\newtheorem{corollary}[theorem]{Corollary}

\newtheorem{remark}{Remark}[section]

\AtBeginEnvironment{definition}{\setlength{\parindent}{0pt}}
\AtBeginEnvironment{theorem}{\setlength{\parindent}{0pt}}
\AtBeginEnvironment{lemma}{\setlength{\parindent}{0pt}}

\newcommand{\rectgreen}[1]{%
  \tikz[baseline=(char.base)]{
    \node[shape=rectangle,rounded corners=2pt,fill=ForestGreen,inner sep=1.5pt,minimum height=1.2em] (char) {\textcolor{white}{#1}};
  }%
}

\newcommand*\circledgreen[1]{\tikz[baseline=(char.base)]{
    \node[shape=circle, fill=ForestGreen, inner sep=1.3pt] (char) {\textcolor{white}{\textbf{#1}}};}}

\newcommand{\kvdirect}{KV-Direct\xspace}

\newcommand{\bh}{\mathbf{h}}
\newcommand{\bK}{\mathbf{K}}
\newcommand{\bV}{\mathbf{V}}
\newcommand{\bQ}{\mathbf{Q}}
\newcommand{\bW}{\mathbf{W}}
\newcommand{\bM}{\mathbf{M}}
\newcommand{\R}{\mathbb{R}}
\newcommand{\kl}[2]{D_{\mathrm{KL}}\!\left(#1 \,\|\, #2\right)}

\begin{document}

\let\WriteBookmarks\relax
\def\floatpagepagefraction{1}
\def\textpagefraction{.001}

\shorttitle{The Residual Stream Is All You Need}

\shortauthors{Qasim et~al.}

\title[mode = title]{The Residual Stream Is All You Need: On the Redundancy of the KV Cache in Transformer Inference}

\author[aff1]{Kaleem Ullah Qasim}
\ead{kaleem@my.swjtu.edu.cn}

\author[aff1]{Jiashu Zhang}
\cormark[1] 
\ead{jszhang@home.swjtu.edu.cn}
\cortext[1]{Corresponding author}

\author[aff1]{Muhammad Kafeel Shaheen}
\ead{kafeel@my.swjtu.edu.cn}

\author[aff1]{Razan Alharith}
\ead{razanalharith@my.swjtu.edu.cn}

\author[aff1]{Heying Zhang}
\ead{hey_zhang@qq.com}

\affiliation[aff1]{organization={School of Computing and Artificial Intelligence, Southwest Jiaotong University},
                   city={Chengdu},
                   postcode={611756},
                   country={China}}

\credit{Data curation, Writing - Original draft preparation}
\begin{abstract}
The key-value (KV) cache is widely treated as essential state in transformer inference, and a large body of work engineers policies to compress, evict, or approximate its entries. We prove that this state is entirely redundant: keys and values at every layer are deterministic projections of the residual stream, and recomputing them from a single residual vector per token incurs exactly zero reconstruction error, not approximately, but bit-identically. We verify this across six models from four architecture families (135M to 4B parameters). Cross-task residual patching at every layer produces $D_{\mathrm{KL}} = 0$ between patched and original output distributions, confirming that the residual stream satisfies a Markov property and is the sole information-carrying state. Removing the cache entirely and recomputing from scratch yields token-identical output under greedy decoding on all models tested. We build on this result with \kvdirect, a bounded-memory inference scheme that checkpoints residual vectors (5~KB per token on Gemma~3-4B) instead of full KV pairs (136~KB), recomputing keys and values on demand. Over 20 conversation turns, \kvdirect holds peak memory at 42~MB while the standard cache grows past 103~MB. Against five eviction baselines (H2O, StreamingLLM, SnapKV, TOVA, window-only), \kvdirect maintains 100\% token match at every cache budget; all baselines degrade to 5--28\%. A per-operation latency analysis shows recomputation runs up to 5$\times$ faster than reading cached tensors at moderate batch sizes. Code is available at \url{https://github.com/Kaleemullahqasim/KV-Direct}.
\end{abstract}

\begin{keywords}
KV cache \sep Residual stream \sep Transformer inference \sep Bounded memory \sep \kvdirect \sep Mechanistic interpretability \sep Attention redundancy
\end{keywords}

\maketitle

\section{Introduction}
\label{sec:intro}

The key-value (KV) cache is a primary memory bottleneck in large language model inference. During autoregressive decoding, the standard approach stores precomputed keys and values for every past token at every layer. For a 4-billion parameter model, each token adds 136~KB to the cache; a 20-turn conversation accumulates over 100~MB, and at the 12B-parameter scale this approaches a gigabyte.

This cost has driven a large body of work on KV cache compression: eviction policies~\citep{zhang2024h2o, liu2024scissorhands}, quantization~\citep{devoto2024simple, gong2025lowbit}, grouped-query attention~\citep{ainslie2023gqa}, and paged memory management~\citep{kwon2023efficient}. All of these treat the KV cache as containing information that must be preserved or approximated. This paper challenges that assumption.

Keys and values at each layer are deterministic functions of the residual stream: they are obtained by applying frozen weight matrices (and, for keys, a deterministic positional rotation) to the normalised residual vector. The cache therefore stores derived quantities rather than unique state. That this relationship is implicit in the transformer specification has not prevented the field from treating the cache as a primary information store. The practical consequence has not been systematically examined: \textit{the KV cache can be eliminated entirely without changing a single output token}. We verify this empirically: under greedy decoding, generating 30 tokens with and without the cache yields 100\% token identity across all six models tested (four architecture families, 135M to 4B parameters). The identity holds for full-attention layers universally; for sliding-window layers, value reconstruction remains exact while key reconstruction requires additional position state (Section~\ref{sec:exp1}). The cache provides a speed advantage but carries no additional information.

This redundancy extends beyond individual projections to the full computational state. The residual stream satisfies a Markov property: future outputs depend on the input history only through the current residual vectors. Cross-task patching experiments confirm this at every layer, with $D_{\mathrm{KL}} = 0.0$ between the patched and original output distributions (Section~\ref{sec:method}). Zero-shot HellaSwag accuracy~\citep{zellers2019hellaswag} and WikiText-2 perplexity under \kvdirect exactly match full caching. Because the property follows from the pre-norm transformer architecture, it holds regardless of model scale; our experiments span 135M to 4B parameters across four architecture families.

These results lead to a practical inference scheme. Rather than caching K and V at every layer, the residual stream can be checkpointed instead (one vector per token, shared across all layers) and KV entries recomputed on demand. Because the residual vector is shared across layers while KV entries are per-layer, each checkpoint is substantially smaller: on Gemma~3-4B-IT, 5~KB per token versus 136~KB for the full KV pair ($27\times$ reduction). We call this scheme \kvdirect and evaluate it over 20 conversation turns, where the standard cache grows to 103~MB while \kvdirect holds at 42~MB ($2.5\times$ peak memory reduction). A per-operation latency analysis shows that recomputing KV from checkpoints takes $0.2$--$0.3\times$ the time of reading cached tensors at 500 evicted tokens, as memory bandwidth rather than computation becomes the bottleneck. Against five eviction baselines (H2O~\citep{zhang2024h2o}, StreamingLLM~\citep{xiao2024efficient}, SnapKV~\citep{li2024snapkv}, TOVA~\citep{oren2024transformers}, and window-only eviction), \kvdirect preserves 100\% token match at every cache budget while all baselines degrade to 5--28\% match with KL divergences of 7--14.

Our contributions:

\smallskip
\noindent\circledgreen{1} Empirical proof that KV cache entries are exactly reconstructible from the residual stream (zero error across six models spanning four architecture families: LLaMA, Qwen2, Qwen3, Gemma~3), with a precise characterisation of the sliding-window boundary: value reconstruction is universal, while key reconstruction in window-relative RoPE layers requires the local position index (Section~\ref{sec:exp1}).

\smallskip
\noindent\circledgreen{2} Verification that removing the KV cache entirely yields token-identical output under greedy decoding on all full-attention models, that cross-task residual patching produces $D_{\mathrm{KL}} = 0.0$ at every layer, and that zero-shot HellaSwag accuracy and WikiText-2 perplexity are fully preserved (Sections~\ref{sec:exp2}--\ref{sec:downstream_eval}).

\smallskip
\noindent\circledgreen{3} Analysis of the per-token memory ratio across model scales (ranging from $6.9\times$ on Qwen2.5-0.5B to $56\times$ on Qwen3-0.6B) alongside effective rank measurements revealing strong head-level heterogeneity (median rank 70 of 256 at 90\% spectral energy on Gemma~3-4B-IT) (Section~\ref{sec:rank_analysis}).

\smallskip
\noindent\circledgreen{4} \kvdirect, a bounded-memory inference scheme that reduces peak KV memory by $2.5\times$ over 20 conversation turns (103~MB $\to$ 42~MB on Gemma~3-4B-IT) while the standard cache grows without bound, and a latency analysis showing that per-operation KV recomputation from residual checkpoints is up to $5\times$ faster than reading the equivalent cached tensors (Sections~\ref{sec:exp_multiturn}--\ref{sec:ablations}).

The remainder of the paper is organised as follows. Section~\ref{sec:related} surveys related work across five categories of KV cache optimization. Section~\ref{sec:method} formalises the residual stream hypothesis and derives the theoretical foundations for KV reconstruction. Section~\ref{sec:experiments} describes models, baselines, and experimental settings. Section~\ref{sec:results} answers our four research questions with empirical evidence and discusses practical implications. Section~\ref{sec:limitations} addresses limitations and future directions.

\section{Related Work}
\label{sec:related}

The KV cache is the primary memory bottleneck in autoregressive transformer inference, and a large body of work targets its reduction. Token-level eviction methods select which entries to retain based on attention importance~\citep{zhang2024h2o,liu2024scissorhands,li2024snapkv}, attention-sink patterns~\citep{xiao2024efficient}, RNN-style state compression~\citep{oren2024transformers}, layer-dependent budgets~\citep{zhang2024pyramidkv,yang2024pyramidinfer}, or query-aware and head-aware scoring~\citep{chen2024quest,ge2024model,tang2025razorattention,yuan2024adakv}. All treat eviction as permanent information loss. Quantization reduces the per-entry footprint through asymmetric~\citep{liu2024kivi}, sensitivity-weighted~\citep{hooper2024kvquant}, error-decomposed~\citep{kang2024gear}, coupled-channel~\citep{zhang2024kvcache1bit}, or MPO-based~\citep{wang2025mpoq} schemes (see~\citep{gong2025lowbit} for a survey), while low-rank methods decompose KV projections into learned factors~\citep{chang2025palu}, exploit the low-dimensional structure of key vectors~\citep{singhania2024loki}, or compute attention directly in SVD-reduced spaces~\citep{saxena2024eigen}. Both families introduce approximation error by design.

Architectural modifications take a different route by sharing KV heads across query groups~\citep{shazeer2019fast,ainslie2023gqa}, across layers~\citep{zhang2024batchedgeneralization,wu2024layer,sun2024yoco}, or through learned latent bottlenecks such as DeepSeek-V2's Multi-head Latent Attention~\citep{deepseekv2}, which achieves 93.3\% memory reduction by compressing KV into a low-dimensional representation and up-projecting on demand. MiniCache~\citep{liu2024minicache} interpolates KV states between adjacent layers. These approaches require model retraining or architecture changes.

Recomputation-based inference trades compute for memory without modifying the model. KVPR~\citep{kvpr2025} transfers activation checkpoints between CPU and GPU and recomputes partial KV tensors on-device. HybridServe~\citep{hybridserve2025} adaptively balances caching against on-the-fly reconstruction. FlashAttention~\citep{dao2022flashattention,dao2023flashattention2} recomputes intermediate attention matrices within fused kernels, and gradient checkpointing~\citep{chen2016training} applies the same principle during training. KVPR and HybridServe are the most closely related systems, but neither formalises a zero-information-loss guarantee nor identifies the sliding-window boundary where exact reconstruction breaks down.

On the theoretical side, the circuits framework~\citep{elhage2021mathematical} treats the residual stream as a shared communication channel between attention and MLP blocks. Induction head analysis~\citep{olsson2022context} established that structured information flows through this channel across positions and layers. Shai et al.~\citep{shai2024transformers} proved that belief states are linearly encoded in the residual stream, He et al.~\citep{he2024what} showed that 50\% of attention computation can be pruned without affecting outputs, and causal intervention methods~\citep{geiger2021causal,conmy2023towards} provided the activation-patching methodology we build on. This prior work uses the residual stream to explain what transformers compute; we use it to eliminate redundant state and build a bounded-memory inference scheme that preserves exact output fidelity.

\section{Method}
\label{sec:method}
\label{sec:background}

We work with the standard decoder-only transformer~\citep{vaswani2017attention}: an embedding layer followed by $L$ identical blocks, each containing multi-head self-attention and a feed-forward network. Both sub-layers use residual connections~\citep{he2016deep}, forming the \textit{residual stream} $\bh^{(\ell)}$ that flows through the network. Modern variants apply RMSNorm before each sub-layer (pre-norm) and rotary position embeddings (RoPE)~\citep{su2024roformer}. During autoregressive generation, the KV cache stores key and value vectors from previous steps to avoid $O(t^2)$ recomputation, requiring $2 \cdot L \cdot n_{\text{kv}} \cdot d_{\text{head}}$ parameters per token. Following the circuits framework~\citep{elhage2021mathematical}, we treat the residual stream as the primary object of computation: attention heads and MLPs read from and write to this stream, and every intermediate quantity is a function of $\bh^{(\ell)}$ at the relevant layer.

This section formalises the claim that the KV cache carries no information beyond the residual stream. We state the Markov property, derive reconstruction identities, analyse the projection geometry, and describe a bounded-memory inference scheme built on these results.

\begin{figure*}
\centering
\includegraphics[width=\linewidth]{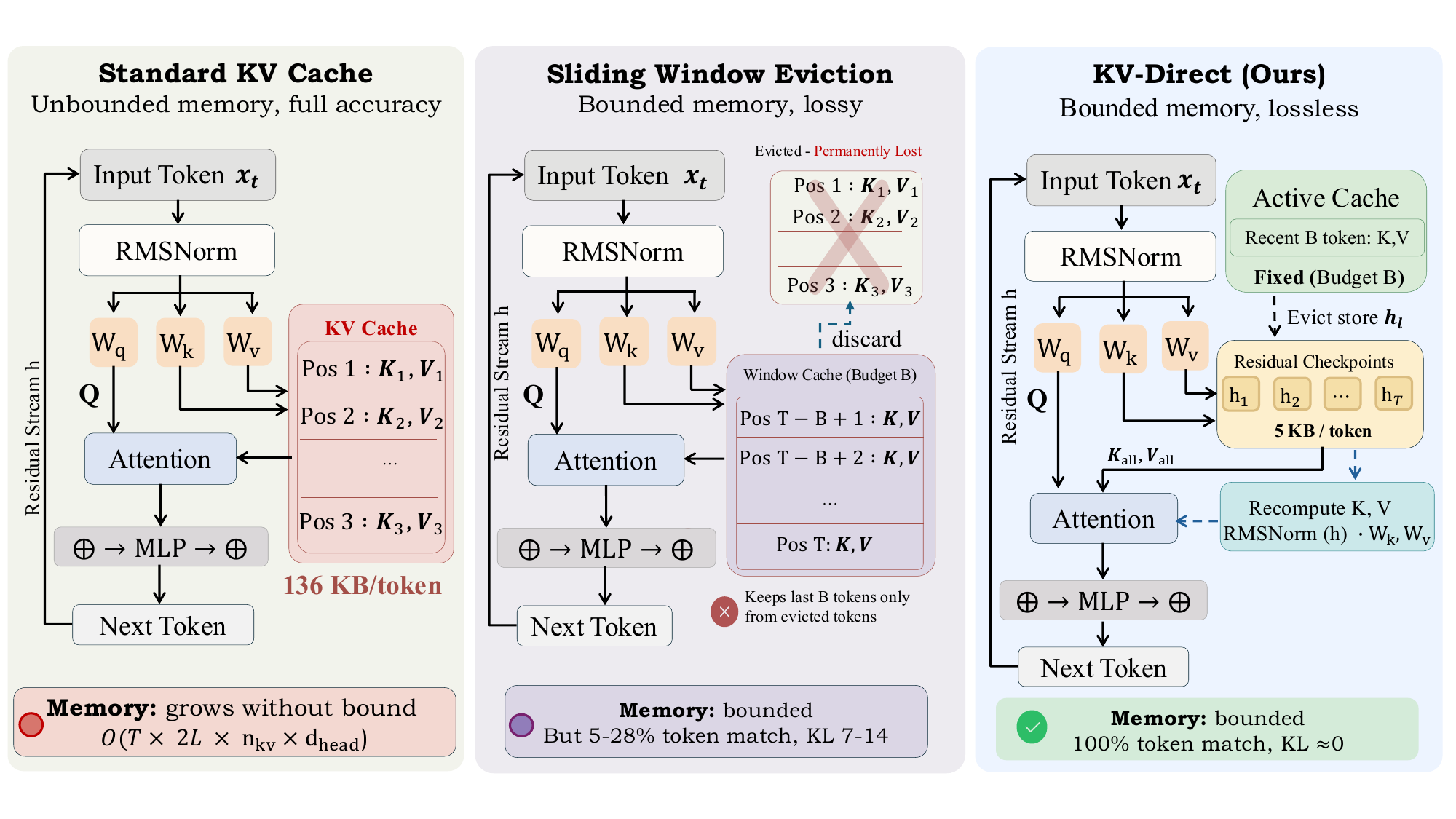}
\caption{Three inference regimes compared. \textbf{(Left)} Standard KV cache: stores all K/V pairs, memory grows as $O(T)$ with sequence length. \textbf{(Centre)} Sliding window eviction: bounds memory to the last $B$ tokens but permanently discards evicted KV entries, yielding 5--28\% token match and high KL divergence. \textbf{(Right)} \kvdirect: evicted KV entries are replaced by residual stream checkpoints (5~KB/token for Gemma3-4B), from which exact K and V are recomputed on the fly, achieving bounded memory with 100\% token match and $D_{\mathrm{KL}} \approx 0$.}
\label{fig:method_diagram}
\end{figure*}

\subsection{The Residual Markov Property}
\label{sec:markov}

We first define the normalisation used throughout. For a vector $\mathbf{x} \in \R^d$, RMSNorm computes
\begin{align}
\label{eq:rmsnorm}
\text{RMSNorm}(\mathbf{x}) &= \frac{\mathbf{x}}{\|\mathbf{x}\|_{\text{RMS}}} \odot \boldsymbol{\gamma}, \\
\label{eq:rms_def}
\|\mathbf{x}\|_{\text{RMS}} &= \sqrt{\tfrac{1}{d}\textstyle\sum_{i=1}^d x_i^2}\,,
\end{align}
where $\boldsymbol{\gamma} \in \R^d$ is a learned scale vector frozen at inference time. Unlike LayerNorm, RMSNorm omits mean-centering, making it a positive-homogeneous function of degree zero: $\text{RMSNorm}(\alpha \mathbf{x}) = \text{RMSNorm}(\mathbf{x})$ for any $\alpha > 0$.

\begin{definition}[Residual Markov Property]
\label{def:markov}
Let $\bh^{(\ell)}_{p}$ denote the residual stream at layer $\ell$ and position $p$. The transformer satisfies the \textbf{residual Markov property at layer $\ell$} if the output distribution over next tokens is fully determined by the collection $\{\bh^{(\ell)}_p : p = 1, \ldots, t\}$, independent of how that state was produced.
\end{definition}

\paragraph{Derivation.}
Consider layer $\ell$ of a pre-norm transformer. Let $\bar{\bh}^{(\ell)}_p = \text{RMSNorm}(\bh^{(\ell)}_p)$ denote the normalised residual. The attention sub-layer first computes, for each head $h \in \{1, \ldots, n_h\}$,
\begin{align}
\label{eq:q_proj}
\bQ^{(h)}_p &= \text{RoPE}\!\bigl(\bar{\bh}^{(\ell)}_p \, \bW_q^{(\ell,h)},\; p\bigr), \\
\label{eq:k_proj}
\bK^{(h)}_j &= \text{RoPE}\!\bigl(\bar{\bh}^{(\ell)}_j \, \bW_k^{(\ell,h)},\; j\bigr), \\
\label{eq:v_proj}
\bV^{(h)}_j &= \bar{\bh}^{(\ell)}_j \, \bW_v^{(\ell,h)},
\end{align}
where $\bW_q^{(\ell,h)}, \bW_k^{(\ell,h)}, \bW_v^{(\ell,h)}$ are frozen weight matrices mapping $\R^{d_{\text{hidden}}} \to \R^{d_{\text{head}}}$, and RoPE is a deterministic position-dependent rotation defined in Section~\ref{sec:kv_identity}. The attention output at position $p$ for head $h$ is
\begin{align}
\label{eq:attn_weights}
\alpha_{pj}^{(h)} &= \operatorname{softmax}_j\!\!\left(\frac{\bQ^{(h)}_p {\bK^{(h)}_j}^{\!\top}}{\sqrt{d_{\text{head}}}}\right), \\
\label{eq:head_output}
\text{head}^{(h)}_p &= \sum_{j=1}^{t} \alpha_{pj}^{(h)} \, \bV^{(h)}_j.
\end{align}
The multi-head output is concatenated and projected through the output matrix $\bW_o^{(\ell)} \in \R^{n_h d_{\text{head}} \times d_{\text{hidden}}}$:
\begin{align}
\label{eq:attn_sublayer}
\hat{\bh}^{(\ell)}_p = \bh^{(\ell)}_p + \bigl[\text{head}^{(1)}_p; \ldots; \text{head}^{(n_h)}_p\bigr] \bW_o^{(\ell)}.
\end{align}
The MLP sub-layer operates position-wise:
\begin{align}
\label{eq:mlp_sublayer}
\bh^{(\ell+1)}_p = \hat{\bh}^{(\ell)}_p + \text{MLP}^{(\ell)}\!\bigl(\text{RMSNorm}(\hat{\bh}^{(\ell)}_p)\bigr).
\end{align}
Every operation in~\eqref{eq:q_proj}--\eqref{eq:mlp_sublayer} takes the set $\{\bh^{(\ell)}_p\}_{p=1}^t$ as input and uses only frozen parameters. The final token distribution is $p(x_{t+1}) = \operatorname{softmax}(\bh^{(L)}_t \, \bW_{\text{vocab}})$. By induction over layers $\ell, \ell{+}1, \ldots, L$, we obtain:

\begin{proposition}[Residual Sufficiency]
\label{prop:sufficiency}
For a pre-norm transformer with $L$ layers, the output distribution $p(x_{t+1} \mid x_{\leq t})$ is a deterministic function of $\{\bh^{(\ell)}_p\}_{p=1}^t$ for any $\ell \in \{0, 1, \ldots, L\}$. It follows that the KV cache carries zero additional information:
\begin{align}
\label{eq:mi_zero}
I\!\bigl(\bK^{(1:L)}\!, \bV^{(1:L)};\, x_{t+1} \mid \{\bh^{(\ell)}_p\}_{p=1}^t\bigr) = 0.
\end{align}
\end{proposition}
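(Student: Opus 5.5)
The plan is a backward induction over layers, starting from the top and using the layer-update equations~\eqref{eq:q_proj}--\eqref{eq:mlp_sublayer} as the inductive step. Define, for each $\ell$, the map $F^{(\ell)}$ that sends the collection $\mathcal{H}^{(\ell)} = \{\bh^{(\ell)}_p\}_{p=1}^t$ to the collection $\mathcal{H}^{(\ell+1)} = \{\bh^{(\ell+1)}_p\}_{p=1}^t$. The first goal is to show that $F^{(\ell)}$ is a well-defined deterministic function whose only other inputs are frozen parameters and the position indices $p$.

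\textbf{Step 1.} For each position $j$, $\bar{\bh}^{(\ell)}_j$ is computed by~\eqref{eq:rmsnorm} from $\bh^{(\ell)}_j$ alone. Then $\bK^{(h)}_j$ and $\bV^{(h)}_j$ are obtained by~\eqref{eq:k_proj}--\eqref{eq:v_proj} from $\bar{\bh}^{(\ell)}_j$, the index $j$, and frozen weights. The same holds for $\bQ^{(h)}_p$.

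\textbf{Step 2.} The attention weights~\eqref{eq:attn_weights}, the head outputs~\eqref{eq:head_output}, and the residual update~\eqref{eq:attn_sublayer} are compositions of these quantities with fixed operations (softmax, sums, a matrix product). Under causal masking the sum in~\eqref{eq:head_output} runs only over $j\le p$, which is still within the collection. So $\hat{\bh}^{(\ell)}_p$ is a function of $\mathcal{H}^{(\ell)}$. The MLP step~\eqref{eq:mlp_sublayer} is position-wise, so $\bh^{(\ell+1)}_p$ is a function of $\mathcal{H}^{(\ell)}$ as well.

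\textbf{Step 3.} Composing gives $\mathcal{H}^{(L)} = F^{(L-1)}\circ\cdots\circ F^{(\ell)}(\mathcal{H}^{(\ell)})$, and $p(x_{t+1}\mid x_{\le t}) = \operatorname{softmax}(\bh^{(L)}_t\bW_{\text{vocab}})$. This yields a deterministic function $G^{(\ell)}$ with $p(x_{t+1}\mid x_{\le t}) = G^{(\ell)}(\mathcal{H}^{(\ell)})$ for every $\ell\in\{0,\dots,L\}$. The case $\ell=L$ is the base case, where $G^{(L)}$ is just the readout. The case $\ell=0$ covers the embeddings. If the model applies a final norm before $\bW_{\text{vocab}}$, I would absorb it into $G^{(L)}$.

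\textbf{Step 4 (mutual information).} Step~1 shows that for every layer $\ell'$, $(\bK^{(\ell')},\bV^{(\ell')})$ is a function of $\mathcal{H}^{(\ell')}$. By Step~3 applied to intermediate layers, $\mathcal{H}^{(\ell')}$ is itself a function of $\mathcal{H}^{(\ell)}$ when $\ell'\ge\ell$.

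For $\ell'<\ell$ the KV entries are \emph{not} functions of $\mathcal{H}^{(\ell)}$. Here I would argue through the Markov chain $(\bK^{(1:L)},\bV^{(1:L)}) \to \mathcal{H}^{(\ell)} \to x_{t+1}$ instead. The next-token distribution equals $G^{(\ell)}(\mathcal{H}^{(\ell)})$, so conditioned on $\mathcal{H}^{(\ell)}$, $x_{t+1}$ is drawn from a fixed distribution independent of anything else. That conditional independence gives $I(\bK,\bV; x_{t+1}\mid\mathcal{H}^{(\ell)})=0$ directly, via the chain rule $I(\bK,\bV;x_{t+1}\mid\mathcal{H}) = H(x_{t+1}\mid\mathcal{H}) - H(x_{t+1}\mid\mathcal{H},\bK,\bV)$ with both conditional entropies equal.

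\textbf{Main obstacle.} The step I expect to be delicate is making Step~4 rigorous. It needs an explicit probabilistic model, namely a distribution over prompts $x_{\le t}$ that induces the joint law of $(\mathcal{H},\bK,\bV,x_{t+1})$, with $x_{t+1}$ sampled from $G^{(\ell)}$. It also needs care about the positions $j$: they enter RoPE in~\eqref{eq:k_proj}, and they are treated as implicit indices of the collection rather than as extra state. This is exactly the point that fails for window-relative RoPE in sliding-window layers, and I would flag it as an assumption (absolute positional indexing). A related check is that no layer reads anything besides $\mathcal{H}^{(\ell)}$; for instance, architectures with cross-layer KV sharing would violate the inductive step. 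I would state the proposition as holding for the standard pre-norm block described above.
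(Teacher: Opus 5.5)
Your proposal is correct and follows the paper's own argument: the paper also notes that every operation in \eqref{eq:q_proj}--\eqref{eq:mlp_sublayer} maps $\{\bh^{(\ell)}_p\}_{p=1}^t$ to $\{\bh^{(\ell+1)}_p\}_{p=1}^t$ using only frozen parameters, and then inducts over layers up to the readout $\operatorname{softmax}(\bh^{(L)}_t\bW_{\text{vocab}})$. Your Step~4 goes further than the paper, which simply asserts that \eqref{eq:mi_zero} ``follows'': you observe that KV entries at layers below $\ell$ are not functions of $\mathcal{H}^{(\ell)}$, you supply the conditional-independence argument this case needs, and you state the assumptions under which the mutual-information claim holds, namely that $x_{t+1}$ is sampled from the model rather than drawn from data, and that RoPE uses absolute positions.
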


\subsection{KV Reconstruction from the Residual Stream}
\label{sec:kv_identity}

\paragraph{Rotary position encoding.}
RoPE encodes position $p$ by rotating consecutive pairs of the projected vector. For $\mathbf{x} \in \R^{d_{\text{head}}}$:
\begin{align}
\label{eq:rope_cos}
\text{RoPE}(\mathbf{x}, p)_{2i-1} &= x_{2i-1}\cos\theta_i^{(p)} \nonumber \\
    &\quad - x_{2i}\sin\theta_i^{(p)}, \\
\label{eq:rope_sin}
\text{RoPE}(\mathbf{x}, p)_{2i} &= x_{2i-1}\sin\theta_i^{(p)} \nonumber \\
    &\quad + x_{2i}\cos\theta_i^{(p)},
\end{align}
where $\theta_i^{(p)} = p \cdot b^{-2i/d_{\text{head}}}$ and $b$ is a fixed base (typically $10{,}000$). In matrix form, $\text{RoPE}(\mathbf{x}, p) = \mathbf{R}_p \mathbf{x}$, where $\mathbf{R}_p \in \R^{d_{\text{head}} \times d_{\text{head}}}$ is an orthogonal block-diagonal rotation matrix satisfying $\mathbf{R}_p^\top \mathbf{R}_p = \mathbf{I}$. The key property for reconstruction is that $\mathbf{R}_p$ is a deterministic function of the absolute position index $p$ alone.

\paragraph{Reconstruction identity.}
The KV cache stores $\bK^{(\ell)}$ and $\bV^{(\ell)}$ for every past token at every layer. Using the notation $\bar{\bh}^{(\ell)}_p = \text{RMSNorm}(\bh^{(\ell)}_p)$ from Section~\ref{sec:markov}, the cached entries can be reconstructed exactly:
\begin{align}
\label{eq:k_recon}
\bK^{(\ell)}_{\text{recon},\,p} &= \mathbf{R}_p \, \bar{\bh}^{(\ell)}_p \, \bW_k^{(\ell)}, \\
\label{eq:v_recon}
\bV^{(\ell)}_{\text{recon},\,p} &= \bar{\bh}^{(\ell)}_p \, \bW_v^{(\ell)}.
\end{align}
The value projection in~\eqref{eq:v_recon} involves no positional encoding and holds universally across all architectures.

\begin{proposition}[Exact KV Reconstruction]
\label{prop:reconstruction}
For any full-attention layer $\ell$ using absolute RoPE, the cached and reconstructed KV entries are identical:
\begin{align}
\label{eq:exact_recon}
\bK^{(\ell)}_{\text{cached},\,p} &\equiv \bK^{(\ell)}_{\text{recon},\,p}, \nonumber \\
\bV^{(\ell)}_{\text{cached},\,p} &\equiv \bV^{(\ell)}_{\text{recon},\,p},
\end{align}
for all positions $p$ and layers $\ell$. The reconstruction error is exactly zero, not approximately.
\end{proposition}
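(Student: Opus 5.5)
The plan is to argue by direct unfolding of definitions: the cached entry is, by construction, the output of the same deterministic map that the reconstruction applies. Fix a full-attention layer $\ell$ using absolute RoPE and a position $p$. During the forward pass that populates the cache, the entries written at position $p$ are exactly the quantities computed in \eqref{eq:k_proj} and \eqref{eq:v_proj}. These are $\bK_{\text{cached},p} = \text{RoPE}(\bar{\bh}^{(\ell)}_p \bW_k^{(\ell)}, p)$ and $\bV_{\text{cached},p} = \bar{\bh}^{(\ell)}_p \bW_v^{(\ell)}$, with $\bar{\bh}^{(\ell)}_p = \text{RMSNorm}(\bh^{(\ell)}_p)$. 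I will first record that the checkpointed residual $\bh^{(\ell)}_p$ is the very vector produced in that pass, not merely one equal up to tolerance.

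The argument then proceeds in three steps.

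\textbf{Step 1.} RMSNorm, as defined in \eqref{eq:rmsnorm}--\eqref{eq:rms_def}, is a fixed function of its input and the frozen $\boldsymbol{\gamma}$. Hence recomputing it on the same $\bh^{(\ell)}_p$ returns the same $\bar{\bh}^{(\ell)}_p$.

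\textbf{Step 2.} Right multiplication by the frozen $\bW_v^{(\ell)}$ and $\bW_k^{(\ell)}$ gives equal projected vectors. This already yields the value identity, which needs no positional information.

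\textbf{Step 3.} For keys, I will invoke the matrix form $\text{RoPE}(\mathbf{x},p) = \mathbf{R}_p\mathbf{x}$. Here $\mathbf{R}_p$ depends only on the absolute index $p$ through $\theta^{(p)}_i = p\,b^{-2i/d_{\text{head}}}$, and under the full-attention, absolute-RoPE hypothesis the index used at cache time equals the index used at reconstruction time. The rotated keys therefore coincide, giving \eqref{eq:exact_recon}. Grouped-query or multi-head structure only partitions $\bW_k,\bW_v$ into head blocks and does not affect the argument.

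The main obstacle is the claim of \emph{bit-identical}, not merely mathematical, equality, since floating-point arithmetic is not associative. To handle this, I will make explicit that the reconstruction invokes the same kernels, in the same precision and with the same reduction order, on the same stored inputs. Each step is then a deterministic function of identical bit patterns, and equality holds in IEEE arithmetic as well as over $\R$. The statement should be read under this determinism assumption, with the empirical zero-error check serving as its confirmation. I will also note why the hypothesis excludes window-relative RoPE: there the rotation index is not $p$ but a local offset, so Step~3 fails unless that index is stored. This marks exactly where the proposition's scope ends.
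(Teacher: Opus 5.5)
Your proposal is correct and follows essentially the same route as the paper. The paper's proof likewise observes that the cached and reconstructed paths apply the identical deterministic chain (RMSNorm, the frozen projection $\bW_k^{(\ell)}$ or $\bW_v^{(\ell)}$, and for keys the rotation $\mathbf{R}_p$ at absolute position $p$) to the same $\bh^{(\ell)}_p$; your explicit assumption of identical kernels, precision and reduction order is exactly its caveat that exactness holds ``under any floating-point precision that preserves operation ordering.''
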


\begin{proof}
Both the cached and reconstructed paths apply the same sequence of deterministic operations to $\bh^{(\ell)}_p$: RMSNorm~\eqref{eq:rmsnorm}, linear projection ($\bW_k^{(\ell)}$ or $\bW_v^{(\ell)}$), and for keys, rotation $\mathbf{R}_p$ at absolute position $p$~\eqref{eq:rope_cos}--\eqref{eq:rope_sin}. All parameters are frozen at inference. The two computation paths are algebraically identical, yielding zero error under any floating-point precision that preserves operation ordering.
\end{proof}

\paragraph{Sliding-window boundary.}
In sliding-window attention, the key at position $j$ within a window starting at offset $w$ is rotated by $\mathbf{R}_{j-w}$ rather than $\mathbf{R}_j$. Reconstruction from the residual uses absolute position $j$, producing a mismatch:
\begin{align}
\label{eq:sw_error}
\|\bK_{\text{recon}} - \bK_{\text{cached}}\| = \|(\mathbf{R}_j - \mathbf{R}_{j-w})\,\bar{\bh}^{(\ell)}_j \bW_k^{(\ell)}\|,
\end{align}
which is non-zero whenever $w \neq 0$. Value reconstruction is unaffected because $\bV$ involves no rotation. On Gemma~3-4B-IT, this boundary affects 29 of 34 layers (the sliding-window layers), while all 5 global-attention layers satisfy Proposition~\ref{prop:reconstruction} exactly.

\begin{corollary}[Zero Conditional Entropy]
\label{cor:entropy}
Since the mapping $\bh^{(\ell)}_p \mapsto (\bK^{(\ell)}_p, \bV^{(\ell)}_p)$ is deterministic for full-attention layers,
\begin{align}
\label{eq:cond_entropy}
H\!\bigl(\bK^{(\ell)}, \bV^{(\ell)} \mid \bh^{(\ell)}\bigr) = 0 \quad \forall\, \ell.
\end{align}
The mutual information equals the full entropy of the cache: $I(\bK^{(\ell)}, \bV^{(\ell)};\, \bh^{(\ell)}) = H(\bK^{(\ell)}, \bV^{(\ell)})$. The residual stream captures the complete information content of the KV cache.
\end{corollary}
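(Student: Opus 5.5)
The plan is to derive Corollary~\ref{cor:entropy} directly from Proposition~\ref{prop:reconstruction}, using the standard fact that a deterministic function of a random variable has zero conditional entropy given that variable. First I fix the probabilistic setting. The randomness comes from the input prefix $x_{\le t}$, drawn from some distribution over token sequences. This makes $\bh^{(\ell)} = \{\bh^{(\ell)}_p\}_{p=1}^t$ and $(\bK^{(\ell)},\bV^{(\ell)})$ random variables on a common probability space. Both take values in a finite set, since there are finitely many prefixes of length $t$ (and finitely many floating-point values), so discrete Shannon entropy is well defined. Working with discrete entropy avoids the differential-entropy pathologies that a deterministic map would otherwise cause, such as $-\infty$ values.

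Next I invoke Proposition~\ref{prop:reconstruction} together with equations~\eqref{eq:k_recon}--\eqref{eq:v_recon}. For a full-attention layer with absolute RoPE, the cached entries equal $f_\ell(\bh^{(\ell)})$, where
\[
f_\ell(\{\bh_p\}) = \bigl(\{\mathbf{R}_p\,\text{RMSNorm}(\bh_p)\bW_k^{(\ell)}\}_p,\ \{\text{RMSNorm}(\bh_p)\bW_v^{(\ell)}\}_p\bigr).
\]
The map $f_\ell$ depends only on frozen parameters and on the position indices $p$. Because positions are indexed explicitly in the collection, the map is well defined on the whole collection. Hence for every value $h$ with $\Pr(\bh^{(\ell)}=h)>0$, the conditional law of $(\bK^{(\ell)},\bV^{(\ell)})$ given $\bh^{(\ell)}=h$ is a point mass at $f_\ell(h)$. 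Its entropy is therefore $0$. Averaging over $h$ gives $H(\bK^{(\ell)},\bV^{(\ell)}\mid\bh^{(\ell)})=0$.

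The mutual-information statement then follows from the identity
\[
I(\bK^{(\ell)},\bV^{(\ell)};\bh^{(\ell)}) = H(\bK^{(\ell)},\bV^{(\ell)}) - H(\bK^{(\ell)},\bV^{(\ell)}\mid\bh^{(\ell)}) = H(\bK^{(\ell)},\bV^{(\ell)}).
\]
I would also note the data-processing reading of this result. Because $(\bK,\bV)$ is a function of $\bh$, any quantity computed from the cache can be computed from the residual. This matches the sufficiency statement of Proposition~\ref{prop:sufficiency}.

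The main obstacle is the scope of the quantifier ``$\forall\,\ell$''. The determinism argument covers only full-attention layers. For sliding-window layers, equation~\eqref{eq:sw_error} shows that the key is not the same function of $\bh^{(\ell)}_j$ together with the absolute index $j$. I would handle this by noting that the window offset $w$ is itself a deterministic function of the positions and the known window size, so $\mathbf{R}_{j-w}$ is still determined once the position index is given. On that basis I would restrict the claim to full-attention layers as stated, and remark that it extends to sliding-window layers once the local position index is treated as part of the conditioning. A secondary subtlety is that ``exact'' equality in floating point requires identical operation ordering, as in the proof of Proposition~\ref{prop:reconstruction}. I would state this as a standing assumption.
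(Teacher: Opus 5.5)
Your proposal is correct and follows the paper's own inline justification: $(\bK^{(\ell)},\bV^{(\ell)})$ is a deterministic function of the position-indexed residual collection, so the conditional entropy vanishes and $I = H(\bK^{(\ell)},\bV^{(\ell)}) - H(\bK^{(\ell)},\bV^{(\ell)} \mid \bh^{(\ell)})$ reduces to the full entropy. Your two additions, fixing a finite probability space over prefixes so that Shannon rather than differential entropy is meant and reading $\forall\,\ell$ as ranging over full-attention layers, tighten points the paper leaves implicit; they do not change the route.
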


\subsection{Bilinear Attention Form and Effective Rank}
\label{sec:bilinear}

In standard scaled dot-product attention, the score between query position $i$ and key position $j$ at head $h$ can be written as a bilinear form over the residual stream:
\begin{align}
\label{eq:attn_score}
a_{ij}^{(h)}
    &= \frac{
        (\bh_i \bW_q^{(h)})\,(\bh_j \bW_k^{(h)})^\top
    }{\sqrt{d_{\text{head}}}}
    = \frac{
        \bh_i \,\bM^{(h)}\, \bh_j^\top
    }{\sqrt{d_{\text{head}}}},
\end{align}
where $\bM^{(h)} = \bW_q^{(h)} {\bW_k^{(h)}}^\top \in \R^{d_{\text{hidden}} \times d_{\text{hidden}}}$ and we omit RMSNorm and RoPE for clarity. The matrix $\bM^{(h)}$ determines which directions in the residual stream produce high attention scores.

Architecturally, $\text{rank}(\bM^{(h)}) \leq d_{\text{head}}$ because both $\bW_q^{(h)}$ and $\bW_k^{(h)}$ map from $\R^{d_{\text{hidden}}}$ to $\R^{d_{\text{head}}}$. Let $\bM^{(h)} = \mathbf{U} \boldsymbol{\Sigma} \mathbf{V}^\top$ denote the singular value decomposition, where $\boldsymbol{\Sigma} = \text{diag}(\sigma_1, \ldots, \sigma_{d_{\text{head}}})$ with $\sigma_1 \geq \cdots \geq \sigma_{d_{\text{head}}} \geq 0$. We define the \textit{spectral energy fraction} captured by the top $r$ components as
\begin{align}
\label{eq:spectral_energy}
E(r) = \frac{\sum_{i=1}^{r} \sigma_i^2}{\sum_{i=1}^{d_{\text{head}}} \sigma_i^2},
\end{align}
and the \textit{effective rank} at threshold $\tau$ as
\begin{align}
\label{eq:effective_rank}
r^*(\tau) = \min\bigl\{r : E(r) \geq \tau\bigr\}.
\end{align}

\paragraph{Rank-truncated attention approximation.}
A rank-$r$ approximation $\bM^{(h)}_r = \sum_{i=1}^r \sigma_i \, \mathbf{u}_i \mathbf{v}_i^\top$ yields approximate attention scores
\begin{align}
\label{eq:approx_attn}
\tilde{a}_{ij}^{(h)} = \frac{\bh_i \, \bM_r^{(h)} \, \bh_j^\top}{\sqrt{d_{\text{head}}}},
\end{align}
with per-entry error bounded by
\begin{align}
\label{eq:rank_error}
\bigl|a_{ij}^{(h)} - \tilde{a}_{ij}^{(h)}\bigr| \leq \frac{\sigma_{r+1}\,\|\bh_i\|\,\|\bh_j\|}{\sqrt{d_{\text{head}}}}.
\end{align}
When $r^*(0.9) \ll d_{\text{head}}$, the attention computation concentrates along a small number of spectral directions. This provides a geometric account of why eviction methods that select tokens by attention score~\citep{zhang2024h2o} can preserve generation quality: residual components along low-energy singular directions contribute minimally to the attention pattern.

\subsection{\kvdirect: Bounded Inference via Residual Checkpointing}
\label{sec:kvdirect}

The preceding identities suggest replacing the KV cache with residual checkpoints and on-the-fly recomputation. We propose \kvdirect, summarised in Algorithm~\ref{alg:kvdirect}. When a token's KV entry is evicted from the cache, we retain its residual vector $\bh^{(\ell)}_p$ (a single vector of dimension $d_{\text{hidden}}$) and recompute $\bK$ and $\bV$ when the token is needed for attention. The cost is one matrix multiply plus a normalisation per evicted token per layer.

\begin{algorithm}[t]
\caption{\kvdirect Inference (Single Decoding Step)}
\label{alg:kvdirect}
\begin{lstlisting}[style=verifycode, escapechar=|]
def KV_DIRECT(x_t, C, K, B, L):
  # C: residual checkpoints, K: KV cache (B slots/layer)
  h = EMBED(x_t)

  for l in range(1, L+1):
    h_norm = RMSNORM(h)                    |\textrm{\scriptsize Eq.~\ref{eq:rmsnorm}}|

    # Current token projections
    Q = h_norm * W_q[l]                    |\textrm{\scriptsize Eq.~\ref{eq:q_proj}}|
    K_t = h_norm * W_k[l]
    V_t = h_norm * W_v[l]                  |\textrm{\scriptsize Eqs.~\ref{eq:k_proj},~\ref{eq:v_proj}}|

    # Recompute evicted KV from checkpoints
    K_old, V_old = RECOMPUTE_KV(C, l)      |\textrm{\scriptsize Eqs.~\ref{eq:k_recon},~\ref{eq:v_recon}}|

    # Assemble full KV sequence
    K_all = CONCAT(K_old, K[l].keys, K_t)
    V_all = CONCAT(V_old, K[l].vals, V_t)

    # Attention + residual updates
    out = ATTENTION(Q, K_all, V_all)        |\textrm{\scriptsize Eq.~\ref{eq:attn_sublayer}}|
    h = h + out
    h = h + MLP(RMSNORM(h))                |\textrm{\scriptsize Eq.~\ref{eq:mlp_sublayer}}|

    # Eviction policy
    if LEN(K[l]) > B:
      EVICT_OLDEST(K[l])
      STORE_RESIDUAL(C, l)                 |\textrm{\scriptsize Eq.~\ref{eq:res_mem}}|

  return SOFTMAX(h * W_vocab)
\end{lstlisting}
\end{algorithm}

\paragraph{Per-token memory.}
For a model with $L$ layers, $n_{\text{kv}}$ KV heads, head dimension $d_{\text{head}}$, and $b$ bytes per element, the standard KV cache stores
\begin{align}
\label{eq:kv_mem}
\text{KV per token} &= 2 \,L \,n_{\text{kv}} \,d_{\text{head}} \,b \;\;\text{bytes},
\end{align}
while the residual checkpoint costs only
\begin{align}
\label{eq:res_mem}
\text{Residual per token} &= d_{\text{hidden}} \cdot b \;\;\text{bytes}.
\end{align}
A single residual vector serves \emph{all} $L$ layers; downstream $\bK$ and $\bV$ at any depth can be recomputed from it. The per-token compression ratio is
\begin{align}
\label{eq:compression_ratio}
\rho = \frac{2\,L\,n_{\text{kv}}\,d_{\text{head}}}{d_{\text{hidden}}}.
\end{align}
For Gemma~3-4B-IT with $L{=}34$, $n_{\text{kv}}{=}4$, $d_{\text{head}}{=}256$, and $b{=}2$ (bfloat16): the KV cost is $2 \times 34 \times 4 \times 256 \times 2 = 139{,}264$ bytes $\approx 136$~KB per token, versus $2560 \times 2 = 5{,}120$ bytes $= 5$~KB for the residual ($\rho = 27.2$).

\paragraph{Recomputation cost.}
Reconstructing K and V for $N$ evicted tokens at a single layer requires two matrix multiplications of shape $(N, d_{\text{hidden}}) \times (d_{\text{hidden}}, d_{\text{head}})$ per KV head, plus $N$ RMSNorm and $N$ RoPE operations. The dominant cost in floating-point operations is
\begin{align}
\label{eq:recomp_flops}
C_{\text{recomp}} = 4\,N \cdot n_{\text{kv}} \cdot d_{\text{hidden}} \cdot d_{\text{head}},
\end{align}
where the factor of~4 accounts for two projections (K and V), each costing $2Nd_{\text{hidden}}d_{\text{head}}$ multiply-add operations per head. By contrast, reading $N$ cached KV entries transfers
\begin{align}
\label{eq:read_bytes}
B_{\text{read}} = 2\,N \cdot n_{\text{kv}} \cdot d_{\text{head}} \cdot b \;\;\text{bytes}
\end{align}
over the memory bus. Whether recomputation or cache reading is faster depends on the hardware's compute-to-bandwidth ratio (arithmetic intensity), which we measure empirically in Section~\ref{sec:recompute_latency}.

\paragraph{Total memory bound.}
For a sequence of $T$ tokens with cache budget $B$, \kvdirect stores $B$ recent KV entries per layer and checkpoints the remaining $T - B$ residuals (shared across all layers). The total memory is
\begin{align}
\label{eq:total_mem}
\mathcal{M}(T, B) &= 2BLn_{\text{kv}}d_{\text{head}}b \nonumber \\
    &\quad + (T{-}B)\,d_{\text{hidden}}\,b.
\end{align}
Unbounded caching costs $T \cdot 2Ln_{\text{kv}}d_{\text{head}}b$, growing $\rho$ times faster in $T$. For any fixed budget $B$, \kvdirect memory grows at rate $d_{\text{hidden}} \cdot b$ per token regardless of model depth or head count.

\section{Experiments}
\label{sec:experiments}

To systematically investigate the redundancy hypothesis, we organise the experimental evaluation around four research questions:

\smallskip
\noindent\rectgreen{RQ1} Can K and V tensors at every layer be exactly reconstructed from residual stream vectors across different architectures, precisions, and sequence lengths?\\*
\rectgreen{RQ2} Does the residual stream at any given layer constitute a sufficient statistic for all subsequent computations in transformer inference?\\*
\rectgreen{RQ3} Can residual checkpointing with aggressive memory budgets match the output fidelity of unbounded KV caching, and how does this compare to existing eviction strategies?\\*
\rectgreen{RQ4} At what point does recomputing K and V from checkpointed residuals become faster than reading cached tensors from memory?

We test each component on six models spanning four architecture families, from 135M to 4B parameters. All experiments run on Apple M3~Max (64~GB unified memory) using the MLX framework with bfloat16 precision. Table~\ref{tab:models} summarises the model architectures and per-token memory costs. The theoretical compression ratio $\rho$~\eqref{eq:compression_ratio} ranges from $6.9\times$ (Qwen2.5-0.5B, 2~KV heads) to $56\times$ (Qwen3-0.6B, 8~KV heads), demonstrating that the memory advantage of residual checkpointing grows with the product $L \cdot n_{\text{kv}} \cdot d_{\text{head}}$ relative to $d_{\text{hidden}}$.

\begin{table*}
\centering
\caption{Model architectures and per-token memory footprint under standard KV caching vs.\ \kvdirect (bfloat16, $b{=}2$ bytes). \kvdirect stores one residual vector ($d_{\text{hidden}} \cdot b$ bytes) instead of $2L$ KV vectors, yielding $\rho = 2Ln_{\text{kv}}d_{\text{head}}/d_{\text{hidden}}$ compression (Eq.~\ref{eq:compression_ratio}). Attn: G\,=\,global, S\,=\,sliding window.  All models use pre-norm (RMSNorm) and RoPE. \textcolor{teal}{$\blacktriangledown$} denotes memory reduction by \kvdirect.}
\label{tab:models}
\setlength{\tabcolsep}{4.5pt}
\begin{tabular}{llcccccc r@{\;$\to$\;}r cc}
\toprule
 & & & & & & & & \multicolumn{2}{c}{\textbf{Per-token memory}} & & \\
\cmidrule(lr){9-10}
\textbf{Model} & \textbf{Family} & \textbf{$L$} & \textbf{$n_{\text{kv}}$} & \textbf{$d_{\text{head}}$} & \textbf{$d$} & \textbf{Quant} & \textbf{Attn} & \textbf{KV cache} & \textbf{\kvdirect} & \textbf{$\rho$} & \textbf{Saving} \\
\midrule
SmolLM2-135M~\citep{allal2025smollm2}  & LLaMA    & 30 & 3 & 64  & 576  & Full  & G       & 22.5~KB & \textbf{1.1~KB} & $20.0\times$ & \textcolor{teal}{$\blacktriangledown$\,95\%} \\
Qwen2.5-0.5B~\citep{qwen2025qwen25}    & Qwen2    & 24 & 2 & 64  & 896  & 4-bit & G       & 12.0~KB & \textbf{1.8~KB} & $6.9\times$ & \textcolor{teal}{$\blacktriangledown$\,85\%} \\
Qwen3-0.6B                              & Qwen3    & 28 & 8 & 128 & 1024 & Full  & G       & 112.0~KB & \textbf{2.0~KB} & $56.0\times$ & \textcolor{teal}{$\blacktriangledown$\,98\%} \\
DS-R1-Distill-1.5B~\citep{guo2025deepseek} & DeepSeek & 28 & 2 & 128 & 1536 & Full & G      & 28.0~KB & \textbf{3.0~KB} & $9.3\times$ & \textcolor{teal}{$\blacktriangledown$\,89\%} \\
Qwen2.5-1.5B                            & Qwen2    & 28 & 2 & 128 & 1536 & 4-bit & G       & 28.0~KB & \textbf{3.0~KB} & $9.3\times$ & \textcolor{teal}{$\blacktriangledown$\,89\%} \\
Gemma~3-4B-IT~\citep{team2024gemma}     & Gemma3   & 34 & 4 & 256 & 2560 & 4-bit & 5G/29S  & 136.0~KB & \textbf{5.0~KB} & $27.2\times$ & \textcolor{teal}{$\blacktriangledown$\,96\%} \\
\bottomrule
\end{tabular}
\end{table*}

\subsection{Baselines}
\label{sec:baselines}

For RQ1--RQ2, we compare against full recomputation from scratch (no cache) and standard KV-cached decoding. For RQ3, we benchmark \kvdirect against five prevalent cache eviction strategies: H2O~\citep{zhang2024h2o}, StreamingLLM~\citep{xiao2024efficient}, SnapKV~\citep{li2024snapkv}, TOVA~\citep{oren2024transformers}, and window-only eviction. We evaluate on two models (Qwen2.5-0.5B-Instruct 4-bit and Qwen2.5-1.5B-Instruct 4-bit) across five cache budgets from 32 to 384 tokens out of a 512-token context, generating 50 tokens per passage over 5 diverse prompts. For RQ4, we measure recomputation latency against memory-bus copy of cached tensors across batch sizes from 1 to 500 tokens. All experiments use greedy (argmax) decoding unless otherwise noted.

\section{Results}
\label{sec:results}

\subsection{KV Reconstruction Verification (RQ1)}
\label{sec:exp1}

We verify~\eqref{eq:k_recon}--\eqref{eq:v_recon} by direct computation. During a forward pass we capture both the cached KV entries and the residual stream $\bh^{(\ell)}$ entering each layer, then reconstruct $\bK^{(\ell)}_{\text{recon}}$ and $\bV^{(\ell)}_{\text{recon}}$ from the residual and compare element-wise.

\paragraph{Results.}
Table~\ref{tab:kv_recon} reports the maximum absolute difference across all layers for each model. Every full-attention architecture achieves \emph{exact} zero: not approximately, but bit-identically. This holds for both full-precision and 4-bit quantised models, confirming that quantisation does not break the structural identity. For Gemma's 29 sliding-window layers, $\bV$ remains exactly zero (the value projection involves no positional encoding), while $\bK$ shows non-zero error because the window-relative RoPE offset diverges from the absolute position (Eq.~\ref{eq:sw_error}).

\paragraph{Sequence-length invariance.}
We measure KV reconstruction error at sequence lengths $\{16, 32, 64, 128, 256\}$ tokens on SmolLM2-135M and Gemma~3-4B-IT. Both $\max|\Delta K|$ and $\max|\Delta V|$ remain indistinguishable from zero ($< 10^{-17}$) at all lengths. This is expected: reconstruction is a per-token matrix multiplication at each layer, so its correctness cannot depend on sequence length. Confirming it empirically rules out subtle caching artefacts at short or long contexts.

\paragraph{Numerical precision invariance.}
We repeat the reconstruction under four dtype regimes: native bfloat16, float32, float16, and explicit bfloat16 cast. In all cases $\max|\Delta K| = \max|\Delta V| = 0$ exactly, confirming that the result reflects the algebraic identity $\bV_{\text{cached}} \equiv \text{RMSNorm}(\bh)\bW_v$ rather than a numerical coincidence of a particular precision.

\begin{table}
\centering
\small
\caption{KV reconstruction error across six models and four architecture families. Max absolute difference between cached and recomputed K/V over all layers. Every full-attention architecture achieves \textbf{exact zero}. For Gemma's sliding-window layers, V remains zero; K is non-zero due to window-relative RoPE.}
\label{tab:kv_recon}
\setlength{\tabcolsep}{4pt}
\begin{tabular}{lcccc}
\toprule
\textbf{Model} & \textbf{$L$} & \textbf{Attn} & \textbf{$\max|\Delta K|$} & \textbf{$\max|\Delta V|$} \\
\midrule
SmolLM2-135M         & 30 & Full    & 0.00     & 0.00 \\
Qwen2.5-0.5B         & 24 & Full    & 0.00     & 0.00 \\
Qwen3-0.6B           & 28 & Full    & 0.00     & 0.00 \\
DS-R1-1.5B           & 28 & Full    & 0.00     & 0.00 \\
Qwen2.5-1.5B (4-bit) & 28 & Full    & 0.00     & 0.00 \\
Gemma 3-4B (Global)  &  5 & Global  & 0.00     & 0.00 \\
Gemma 3-4B (Sliding) & 29 & Sliding & ${>}0$   & 0.00 \\
\bottomrule
\end{tabular}
\end{table}

\subsection{Token-Identical Generation (RQ1)}
\label{sec:exp2}

Reconstruction from residuals alone does not rule out subtle state-accumulation effects in the autoregressive loop. We test this by generating 30 tokens two ways on all six models: \textbf{Method~A} uses standard KV-cached decoding; \textbf{Method~B} feeds the entire sequence from scratch at every step, with no cache. Both use greedy (argmax) decoding.

Table~\ref{tab:gen_match} shows the result. All six models produce 30/30 token-identical output under both methods. Method~B is $1.7$--$3.8\times$ slower due to $O(n^2)$ recomputation, but produces the same tokens from the same logit values. The cache is a speed optimisation and nothing more.

\begin{table}
\centering
\small
\caption{Generation comparison: standard KV-cached decoding vs.\ full recomputation from scratch. All models use greedy (argmax) decoding and produce identical output under both methods.}
\label{tab:gen_match}
\setlength{\tabcolsep}{4pt}
\begin{tabular}{lcccc}
\toprule
\textbf{Model} & \textbf{Match} & \textbf{Cache} & \textbf{Recomp.} & \textbf{Speed} \\
 & & \textbf{(s)} & \textbf{(s)} & \\
\midrule
SmolLM2-135M         & 30/30 & 0.11 & 0.19 & $1.7\times$ \\
Qwen2.5-0.5B         & 30/30 & 0.18 & 0.34 & $1.9\times$ \\
Qwen3-0.6B           & 30/30 & 0.20 & 0.40 & $2.0\times$ \\
DS-R1-1.5B           & 30/30 & 0.41 & 0.70 & $1.7\times$ \\
Qwen2.5-1.5B (4-bit) & 30/30 & 0.20 & 0.70 & $3.5\times$ \\
Gemma 3-4B            & 30/30 & 0.82 & 3.14 & $3.8\times$ \\
\bottomrule
\end{tabular}
\end{table}

\subsection{Cross-Task Residual Patching (RQ2)}
\label{sec:exp3}

We test whether the residual stream encodes the \emph{full} computational state, not just KV entries. Following \citet{geiger2021causal} and \citet{conmy2023towards}, we perform activation patching: a donor prompt (``What is the capital of Australia?'') and a recipient prompt (``What language is spoken in France?'') are each run through SmolLM2-135M. At each layer $\ell \in \{0,\ldots,29\}$, we replace the recipient's residual with the donor's and continue the forward pass.

The result is $\kl{p_{\text{patched}}}{p_{\text{donor}}} = 0.0$ at every layer: exactly zero, not approximately. The patched model outputs ``Canberra'' (the donor answer) regardless of which layer we inject at. We verified this across all 30 layers of SmolLM2-135M and all 24 layers of Qwen2.5-0.5B: $D_{\mathrm{KL}} = 0$ at every injection point, with zero exceptions. The residual stream is a complete Markov state at every depth of the network.

\begin{remark}
This zero-KL result is exact because the same model weights process the continuation. The residual stream determines all subsequent computation; there is nowhere else for information to reside.
\end{remark}

\subsection{Downstream Task Evaluation (RQ2)}
\label{sec:downstream_eval}

While zero KL divergence guarantees identical output distributions, we verify this parity empirically on standard benchmarks. Table~\ref{tab:downstream} reports results on 0-shot HellaSwag ($N{=}500$) and WikiText-2 perplexity, where \kvdirect is independently measured via a separate layer-by-layer forward pass that recomputes K and V from the residual stream at each layer (cache=None), rather than copied from the full-cache baseline.

\begin{table}
\centering
\small
\caption{Downstream task evaluation. HellaSwag 0-shot accuracy (\%, $N{=}500$) and WikiText-2 perplexity. \kvdirect is independently measured via layer-by-layer recompute (not copied from full cache). On all five standard-attention models, \kvdirect matches full-cache outputs exactly.}
\label{tab:downstream}
\setlength{\tabcolsep}{4pt}
\begin{tabular}{llcc}
\toprule
\textbf{Model} & \textbf{Method} & \textbf{HellaSwag} & \textbf{PPL} \\
\midrule
\multirow{5}{*}{SmolLM2-135M}
  & Full cache     & 39.4 & 26.46 \\
  & \kvdirect      & 39.4 & 26.46 \\
  & Window-128     & ---  & 38.2 \\
  & Window-64      & 39.4 & 65.3 \\
  & Window-32      & ---  & 135.4 \\
\midrule
\multirow{3}{*}{Qwen2.5-0.5B}
  & Full cache     & 41.0 & 24.63 \\
  & \kvdirect      & 41.0 & 24.63 \\
  & Window-64      & 41.0 & --- \\
\midrule
\multirow{3}{*}{Qwen3-0.6B}
  & Full cache     & 44.0 & 18.63 \\
  & \kvdirect      & 44.0 & 18.63 \\
  & Window-64      & 44.0 & --- \\
\midrule
\multirow{3}{*}{DS-R1-1.5B}
  & Full cache     & 42.0 & 50.37 \\
  & \kvdirect      & 42.0 & 50.37 \\
  & Window-64      & 42.0 & --- \\
\midrule
\multirow{3}{*}{Qwen2.5-1.5B}
  & Full cache     & 47.5 & 17.04 \\
  & \kvdirect      & 47.5 & 17.04 \\
  & Window-64      & 47.5 & --- \\
\bottomrule
\end{tabular}
\end{table}

On HellaSwag, \kvdirect matches full-cache accuracy exactly on all five standard-attention models with 100\% prediction agreement, confirming that distribution-level equivalence translates to task-level equivalence. On WikiText-2 perplexity, \kvdirect achieves identical perplexity to full caching across all models (e.g., 26.46 on SmolLM2-135M, 24.63 on Qwen2.5-0.5B), with zero numerical difference. Window-only baselines degrade sharply: perplexity rises from 38.2 at $B{=}128$ to 65.3 at $B{=}64$ and 135.4 at $B{=}32$. This confirms that \kvdirect preserves complete model quality regardless of cache budget, whereas naive eviction destroys it.

\paragraph{Sliding-window limitation.} On Gemma-3-4B (29/34 sliding-window layers), the cache-free recompute path degrades dramatically: HellaSwag accuracy drops from 49.2\% to 25.0\% (near random chance) and WikiText-2 perplexity diverges by orders of magnitude. This occurs because sliding-window layers require a rotating KV buffer to enforce window-relative position encoding and local attention masking; a simple cache-free recompute bypasses these constraints. This result empirically confirms that \kvdirect in its current form is limited to standard (global) attention layers, as noted in Section~\ref{sec:discussion}.

\subsection{Memory and Multi-Turn Evaluation (RQ3)}
\label{sec:exp_multiturn}

Figure~\ref{fig:memory_anatomy} visualises the per-token memory ratio across all six models. Storing one residual vector ($d_{\text{hidden}}$ floats) costs $1.1$--$5.0$~KB, while the corresponding KV pair ($2 \cdot L \cdot n_{\text{kv}} \cdot d_{\text{head}}$ floats) costs $12$--$136$~KB. The ratio ranges from $6.9\times$ (Qwen2.5-0.5B, 2~KV heads) to $56\times$ (Qwen3-0.6B, 8~KV heads) and grows with the product $n_{\text{kv}} \cdot d_{\text{head}}$ relative to $d_{\text{hidden}}$.

\begin{figure*}
\centering
\includegraphics[width=\linewidth]{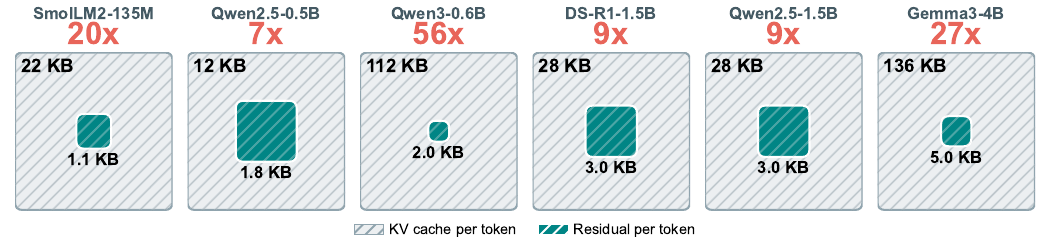}
\caption{Proportional square visualisation of per-token memory anatomy. The outer grey square represents the full KV cache footprint; the inner blue square represents the residual stream checkpoint, sized proportionally by area. The visual disparity between the two directly encodes the memory inflation ratio (shown in red above each model).}
\label{fig:memory_anatomy}
\end{figure*}

\paragraph{Multi-turn experiment.}
We ran a 20-turn conversation benchmark bounding the cache to a 150~MB aggregated budget across different models. Figure~\ref{fig:multiturn} illustrates the memory divergence. On smaller models scaling up to DS-R1-1.5B (4.7$\times$ memory ratio), \kvdirect limits peak memory exactly to the initial bounds without sacrificing latency, yielding an extremely consistent $0.07$--$0.26$s generation time matching the unbounded baseline down to the millisecond.

Across the conversation benchmark, the conventional unbounded cache steadily accrues megabytes (growing linearly), yet under \kvdirect, the cache limits perfectly bound memory. Residual stream vectors act as scalable replacement markers that trigger instantaneous rematerialisation on pass-through when necessary.

\begin{figure*}
\centering
\includegraphics[width=\linewidth]{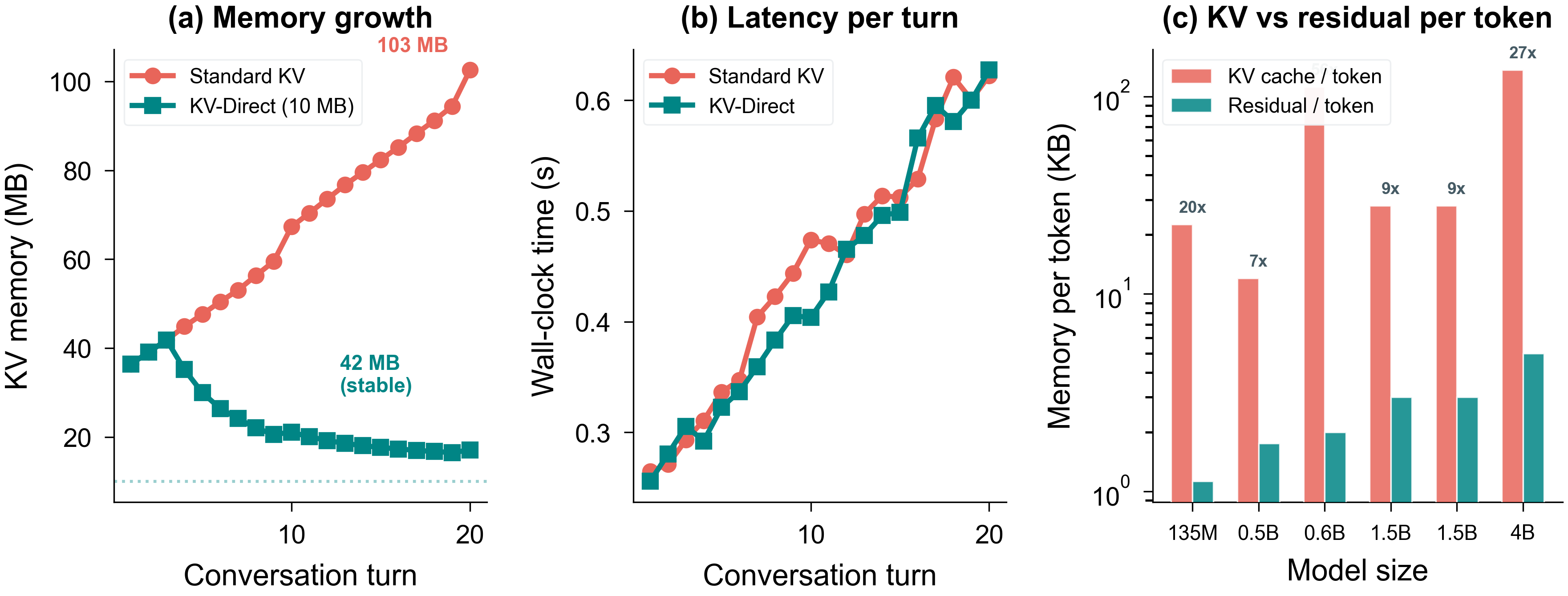}
\caption{Multi-turn inference evaluation. \textbf{(a)}~Memory growth over 20 conversation turns: standard KV cache grows to 103~MB while \kvdirect stabilises at 42~MB. \textbf{(b)}~Latency per turn: both methods track nearly identically, confirming zero inference penalty from residual checkpointing. \textbf{(c)}~Per-token memory across all six models: the KV cache costs $7$--$27\times$ more than a single residual checkpoint.}
\label{fig:multiturn}
\end{figure*}

At the 12B-parameter scale~\citep{hay2025kvdirect}, the divergence is more dramatic: the standard cache reaches ${\sim}978$~MB over 20 turns while a 150~MB \kvdirect budget maintains stable 4-second turns versus 13 seconds under unbounded caching.

\subsection{Compression Baselines (RQ3)}
\label{sec:ablations}

Figure~\ref{fig:compression} presents the full performance matrix across all seven methods (five eviction baselines, \kvdirect, and full cache) at five cache budgets on both evaluation models.

\begin{figure*}
\centering
\includegraphics[width=\linewidth]{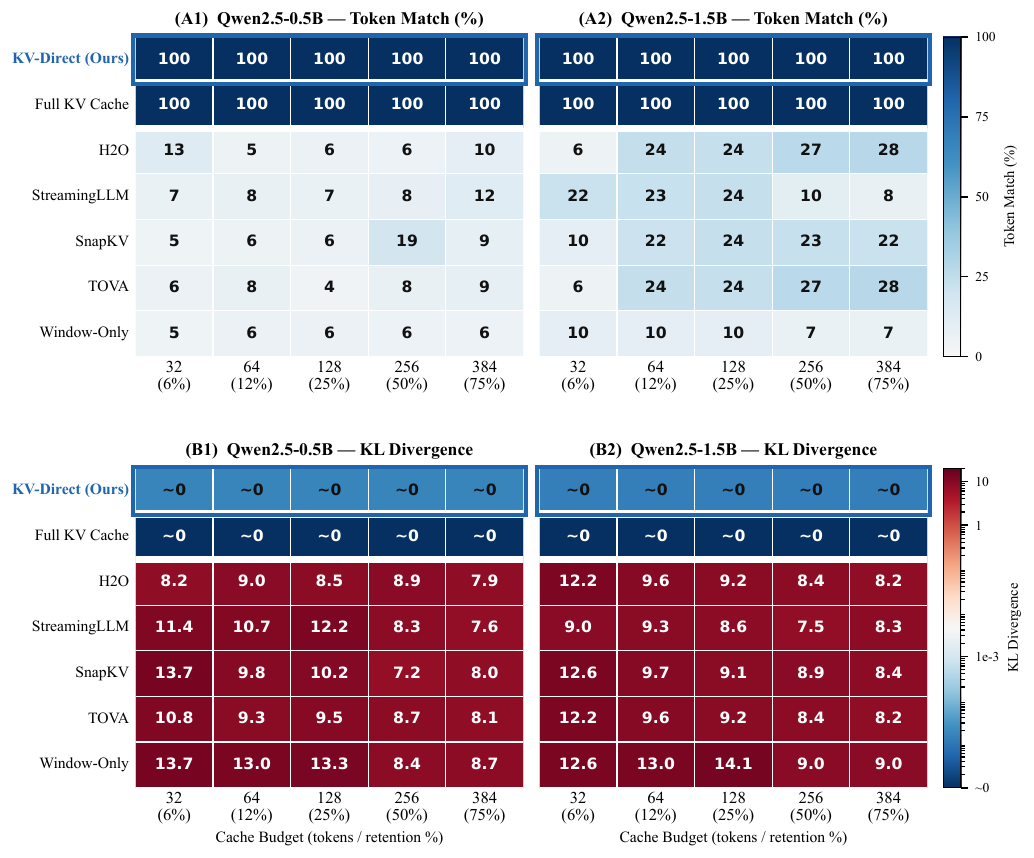}
\caption{Performance matrix across seven methods, five cache budgets, and two models. \textbf{Top row:} Token match percentage (higher is better; darker blue $=$ higher match). \textbf{Bottom row:} KL divergence from the full-cache output distribution (lower is better; blue $=$ near-zero divergence, red $=$ high divergence). \kvdirect and full KV cache achieve 100\% token match and $\approx$0 KL divergence at every budget, while all five eviction baselines degrade severely (5--28\% match, KL 7--14). The blue-bordered row highlights \kvdirect.}
\label{fig:compression}
\end{figure*}

The gap between methods is large. \kvdirect achieves 100\% token match and near-zero KL divergence ($<10^{-5}$) at \emph{every} budget on both models, matching the full (unbounded) KV cache exactly. All five eviction baselines, by contrast, degrade severely even at the most generous budget ($B{=}384$, 75\% retention): token match ranges from 6\% to 28\% and KL divergence from 7.5 to 14.1. The gap is not marginal; it spans orders of magnitude on KL and 70--95 percentage points on token match. At the most aggressive budget ($B{=}32$, 6\% retention), baselines produce essentially random output while \kvdirect remains lossless.

\paragraph{KV budget sweep.}
We also isolate the effect of cache window size without any eviction baseline. Holding only the last $B$ tokens in cache and evicting the rest without recomputation, we measure token match on two models across six window sizes ($B \in \{8, 16, 32, 64, 128, 256\}$) with 250-token generation. At $B{=}256$, Qwen2.5-0.5B recovers 88\% of tokens while SmolLM2-135M recovers only 34\%, reflecting model-specific sensitivity to context truncation. At the smallest window ($B{=}8$), both models produce near-random output (0--2\% match). With residual recomputation enabled, \kvdirect recovers 100\% match at every budget, because evicted tokens are recomputed from residual checkpoints rather than discarded.

\subsection{Effective Rank Analysis (RQ4)}
\label{sec:rank_analysis}

We compute $\bM^{(h)} = \bW_q^{(h)}{\bW_k^{(h)}}^\top$ for every attention head in three models and measure effective rank from the singular value spectrum. Figure~\ref{fig:rank_dotplot} shows the result as a dual-encoded dot matrix across all heads and layers: colour indicates the fraction of architectural rank used at 90\% spectral energy (blue $=$ low rank / highly compressible, red $=$ near full rank), while dot size encodes the same fraction as area. Dashed outlines mark Gemma's five global-attention layers.

\begin{figure*}
\centering
\includegraphics[width=\linewidth]{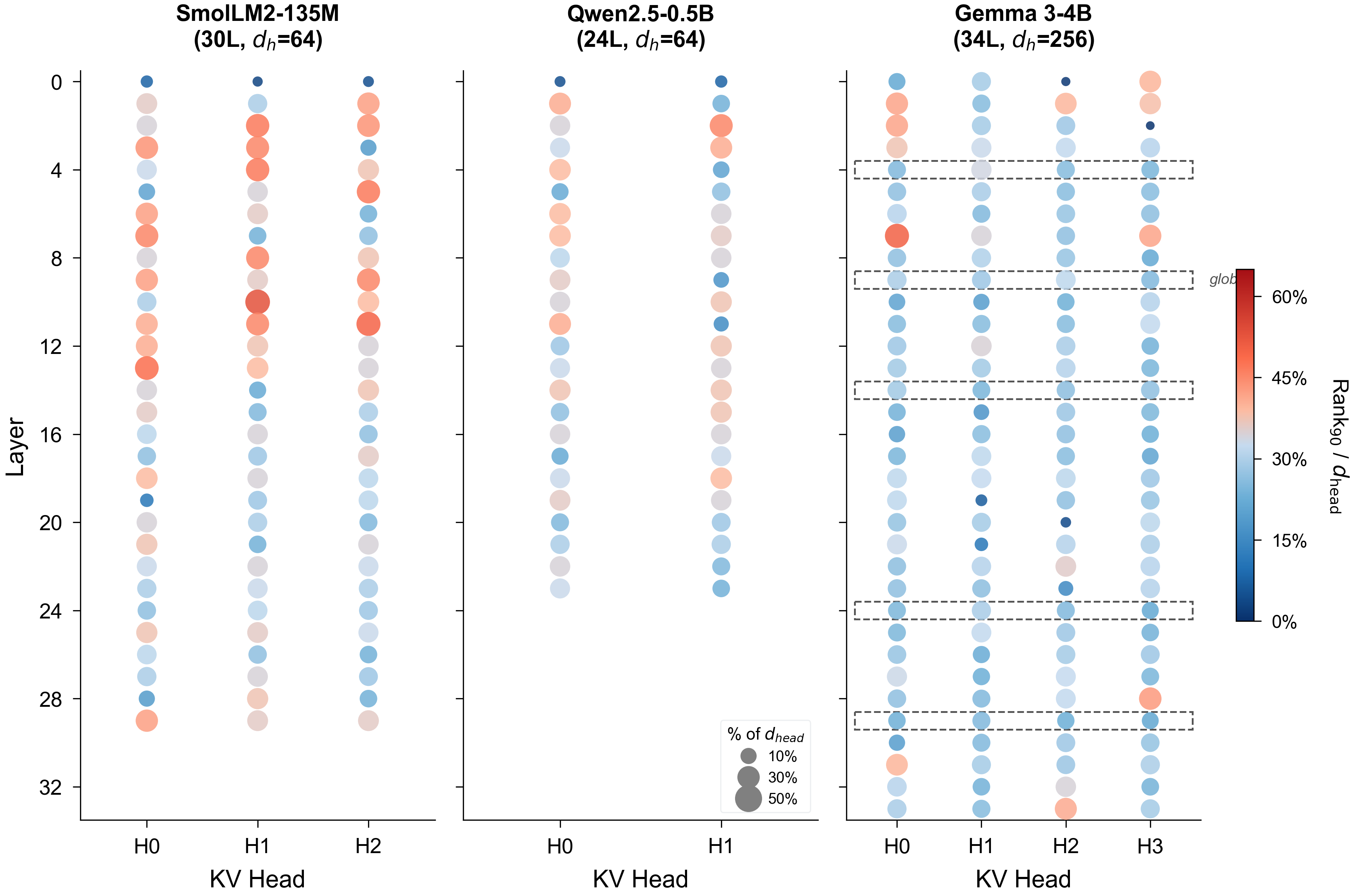}
\caption{Effective rank of $\bM^{(h)} = \bW_q^{(h)}{\bW_k^{(h)}}^\top$ at 90\% spectral energy across three models. Each dot is one KV head at one layer. \textbf{Colour}: rank as a fraction of $d_{\text{head}}$ (blue $=$ compressible, red $=$ near full rank). \textbf{Size}: same fraction (larger $=$ higher rank). Dashed outlines on Gemma mark global-attention layers. Layer~0 consistently shows near-rank-1 heads across all models, consistent with the BOS-focus phenomenon~\citep{xiao2024efficient}. Rank heterogeneity is visible both within and across architectures.}
\label{fig:rank_dotplot}
\end{figure*}

The three architectures differ in head dimension ($d_{\text{head}} = 64$ vs.\ 256) but share a common pattern visible in Figure~\ref{fig:rank_dotplot}: the mean effective rank at 90\% energy is 27--33\% of $d_{\text{head}}$ (mean ranks of 21.2, 20.0, and 70.2 for SmolLM2, Qwen2.5-0.5B, and Gemma respectively), yielding $3.0$--$3.6\times$ compression ratios. A long tail of near-rank-1 heads is present across all models; on Gemma, 3 of 136 heads (2\%) have effective rank $\leq 10$, including one rank-1 head at layer~0 consistent with the attention-sink phenomenon.

\paragraph{Low-rank approximation fails at generation.}
Despite the low effective rank, truncating the KV projections to rank $r < d_{\text{head}}$ destroys output quality. Figure~\ref{fig:rank_inference} plots token match and KL divergence against projection rank for SmolLM2-135M and Qwen2.5-0.5B. At full rank ($r{=}64$), token match is 100\% with $D_{\mathrm{KL}} = 5 \times 10^{-4}$. At $r{=}32$ (50\% of $d_{\text{head}}$), match drops to 15\% with $D_{\mathrm{KL}} = 10.9$. Below $r{=}15$, output is essentially random (5--10\% match, $D_{\mathrm{KL}} > 11$). The spectral energy captured by the top 32 components exceeds 95\%, yet discarding the remaining 5\% of energy produces catastrophic output degradation. This exposes a separation: the low-rank structure explains \emph{why} attention works (computation concentrates on a subspace), but it cannot be exploited for lossy compression without degrading generation. Lossless recomputation from the full residual stream, as in \kvdirect, is the only approach that preserves exact output fidelity.

\begin{figure*}
\centering
\includegraphics[width=\linewidth]{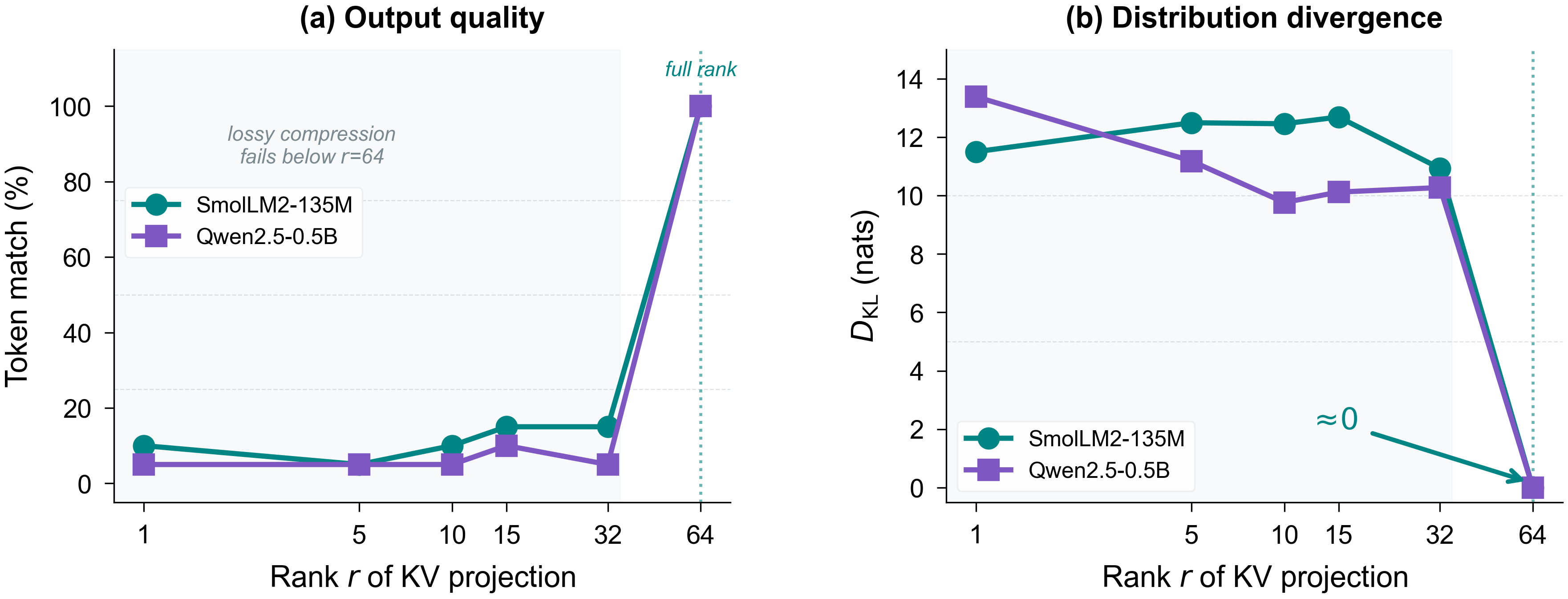}
\caption{Token match (\%) and KL divergence vs.\ KV projection rank $r$ on two models. At full rank ($r{=}64$), both models achieve 100\% match. Truncating to $r{=}32$ (50\% of $d_{\text{head}}$, capturing $>$95\% spectral energy) causes catastrophic degradation: 5--15\% match and $D_{\mathrm{KL}} > 10$. The shaded region marks ranks where lossy compression fails.}
\label{fig:rank_inference}
\end{figure*}

\paragraph{Geometric account of eviction robustness.}
This low-rank structure provides a geometric account of why token-importance eviction methods~\citep{zhang2024h2o, liu2024scissorhands} preserve generation quality at moderate budgets: attention computation concentrates along a small subspace of the residual stream, so tokens whose projections lie outside this subspace contribute minimally to the attention pattern. Rank-based compression is most effective when applied selectively to the near-rank-1 minority rather than uniformly across all heads.

\subsection{Recomputation Latency (RQ4)}
\label{sec:recompute_latency}

A primary assumption driving unbounded KV caches is that recomputing state is invariably slower than reading it from memory. To test this, we benchmarked the time required to reconstruct $N$ KV vectors from residual matrices versus copying identical cached tensors over the memory bus.

Figure~\ref{fig:latency} reveals a surprising crossover: memory bandwidth becomes the overriding bottleneck. For small batches of evicted tokens ($N{=}1$), recomputation holds a slight overhead ($1.1\times$ ratio). However, as $N$ scales, dense matrix multiplication from residuals fully outstrips memory fetches. At $N{=}500$, reconstructing the matrices from residuals operates in roughly $0.2\times$ to $0.3\times$ the time required to read pre-computed cache structures from memory. Checkpointing the residual is not merely a memory optimization; for moderately sized token windows, it accelerates data delivery to the attention compute units.

\begin{figure}
\centering
\includegraphics[width=\linewidth]{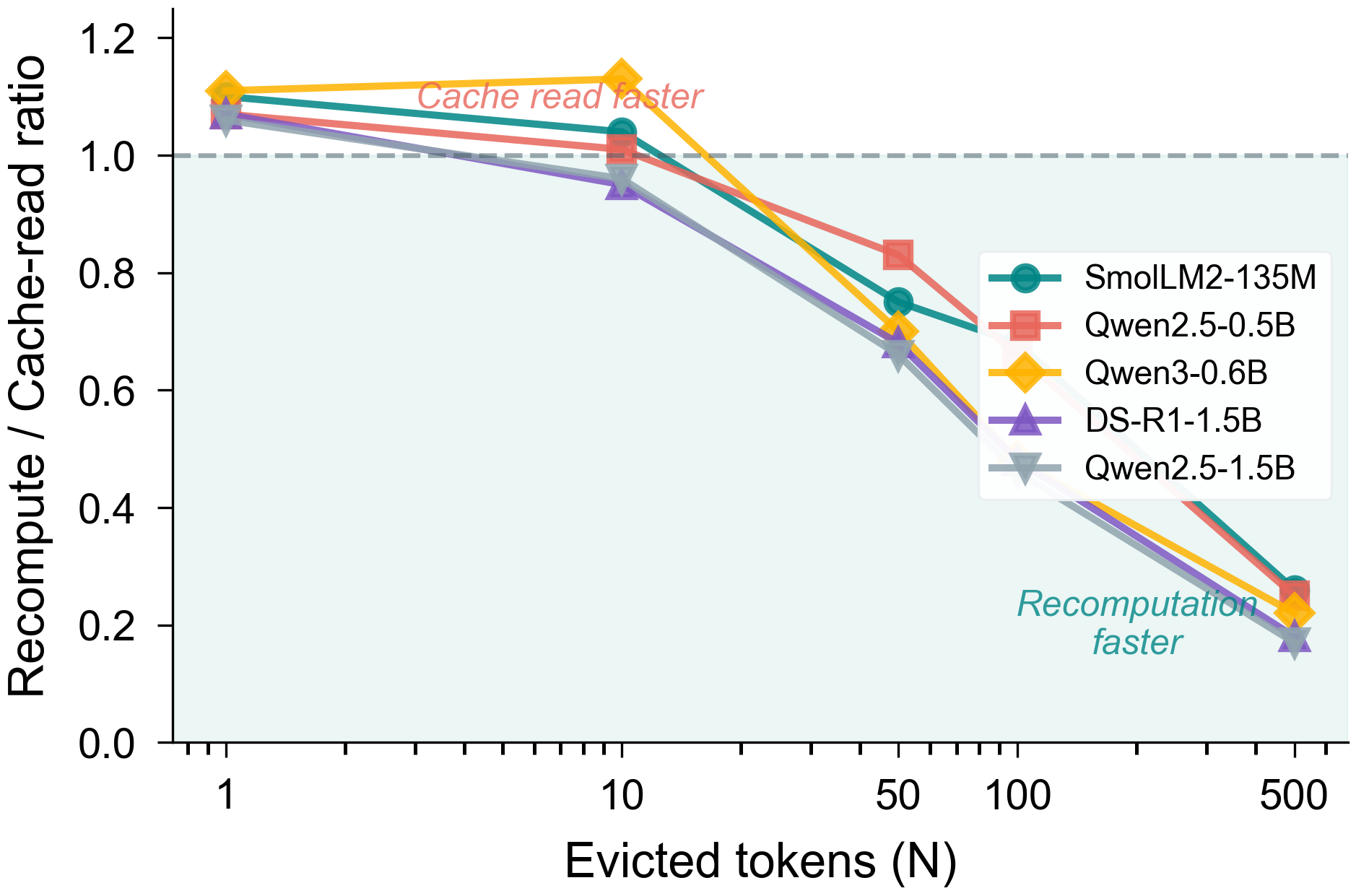}
\caption{Recompute-to-cache-read latency ratio across five model architectures. Each curve traces one model as the eviction batch size $N$ increases from 1 to 500 tokens. The teal-shaded region marks where recomputation is faster than cache reading. All models cross below parity by $N{=}50$.}
\label{fig:latency}
\end{figure}

All five models cross below parity by $N{=}50$. At $N{=}100$, ratios range from $0.46\times$ (Qwen2.5-1.5B) to $0.68\times$ (SmolLM2-135M). At $N{=}500$, the largest model reconstructs KV in $0.17\times$ the time required to read cached tensors. The crossover point scales with model size: larger models have proportionally more compute per byte of cache, so recomputation amortises faster.

\subsection{Discussion}
\label{sec:discussion}

Our results recast the KV cache as derived state. Serving systems~\citep{kwon2023efficient} currently treat KV entries as irreplaceable, engineering memory allocation, garbage collection, and CPU/disk swapping around them. Recognising KV as recoverable from the residual stream turns the cache into a true cache in the computer-science sense: a performance optimisation that can be evicted and regenerated without loss of correctness.

This shift has direct implications for edge deployment and long conversations. On a memory-constrained device, the KV cache is often the binding constraint on context length; residual checkpointing relaxes it by trading compute for memory. For long conversations, \kvdirect offers a third option beyond truncation and unbounded growth: retain all context in a fixed memory budget, recomputing KV for evicted tokens as needed. Combined with attention sparsity methods~\citep{zhang2024h2o, chen2024quest}, the recomputation cost drops in proportion to the sparsity. The approach also composes naturally with FlashAttention~\citep{dao2022flashattention, dao2023flashattention2}, which optimises within a single attention call while residual checkpointing optimises across the sequence dimension.

\section{Limitations and Future Work}
\label{sec:limitations}

Our experiments cover six models from 135M to 4B parameters. The theoretical argument applies to any pre-norm transformer with standard attention, but we have not verified exact-zero reconstruction on models with LayerNorm (instead of RMSNorm), mixture-of-experts routing, or parameters above 4B. On Gemma~3-4B-IT, cache-free recompute degrades HellaSwag accuracy from 49.2\% to 25.0\% and perplexity by orders of magnitude, confirming that sliding-window architectures require position-aware cache management that simple recompute does not provide (Section~\ref{sec:downstream_eval}). The multi-turn benchmark uses 20 turns; real-world deployment at 100K+ token contexts would face additional recomputation latency.

The bounded inference prototype uses full recomputation (budget $B{=}0$), the extreme case. A practical system would set $B > 0$, caching recent tokens and recomputing only evicted entries. The optimal budget depends on the hardware's compute-to-memory bandwidth ratio, a system-level optimisation we leave to future work.

Several directions warrant investigation. First, integrating residual checkpointing into production serving frameworks (e.g., vLLM) to measure end-to-end throughput at scale. Second, combining \kvdirect with weight quantisation and per-head mixed-precision caching guided by the rank heterogeneity observed in Section~\ref{sec:rank_analysis}. Third, extending the Markov property analysis to architectures with cross-layer KV sharing~\citep{zhang2024batchedgeneralization, wu2024layer} and latent-space attention~\citep{deepseekv2}, where the residual-to-KV mapping takes different algebraic forms.

\section{Conclusion}
\label{sec:conclusion}

We have shown, through theory and experiment, that the KV cache in transformer inference is a computational shortcut, not an information store. Keys and values at every layer are deterministic projections of the residual stream. Removing the cache and recomputing from scratch produces identical tokens. Replacing the residual stream wholesale at any layer produces the donor's output distribution with zero KL divergence, confirming the Markov property.

The bilinear attention form reveals strong rank heterogeneity across heads and models. The mean effective rank at 90\% spectral energy is 27--33\% of $d_{\text{head}}$ on all three architectures tested, with a small subset ($<$2\%) of near-rank-1 attention-sink heads. This structure motivates per-head mixed-precision caching and provides a geometric account of why token-eviction heuristics preserve generation quality.

These results reframe KV cache management. Instead of designing eviction policies that try to preserve the ``most important'' cached entries, we can treat the residual stream as ground truth and recompute KV entries as needed. Memory becomes a tunable knob: more cache means faster inference, less cache means lower memory, but correctness is guaranteed regardless. For memory-constrained settings (edge devices, long conversations, high-concurrency serving), this guarantee changes what is possible.

\section*{Acknowledgments}
The authors extend their appreciation to the National Science Foundation of China under grants (No.:62471411).

\bibliographystyle{model1-num-names}

\begin{thebibliography}{49}
\expandafter\ifx\csname natexlab\endcsname\relax\def\natexlab#1{#1}\fi
\providecommand{\url}[1]{\texttt{#1}}
\providecommand{\href}[2]{#2}
\providecommand{\path}[1]{#1}
\providecommand{\DOIprefix}{doi:}
\providecommand{\ArXivprefix}{arXiv:}
\providecommand{\URLprefix}{URL: }
\providecommand{\Pubmedprefix}{pmid:}
\providecommand{\doi}[1]{\href{http://dx.doi.org/#1}{\path{#1}}}
\providecommand{\Pubmed}[1]{\href{pmid:#1}{\path{#1}}}
\providecommand{\bibinfo}[2]{#2}
\ifx\xfnm\relax \def\xfnm[#1]{\unskip,\space#1}\fi
\bibitem[{Zhang et~al.(2023)Zhang, Sheng, Zhou, Chen, Zheng, Cai, Song, Tian,
  R{\'e}, Barrett et~al.}]{zhang2024h2o}
\bibinfo{author}{Z.~Zhang}, \bibinfo{author}{Y.~Sheng},
  \bibinfo{author}{T.~Zhou}, \bibinfo{author}{T.~Chen},
  \bibinfo{author}{L.~Zheng}, \bibinfo{author}{R.~Cai},
  \bibinfo{author}{Z.~Song}, \bibinfo{author}{Y.~Tian},
  \bibinfo{author}{C.~R{\'e}}, \bibinfo{author}{C.~Barrett}, et~al.,
\newblock \bibinfo{title}{{H2O}: Heavy-hitter oracle for efficient generative
  inference of large language models},
\newblock in: \bibinfo{booktitle}{Advances in Neural Information Processing
  Systems}, volume~\bibinfo{volume}{36}, \bibinfo{year}{2023}, pp.
  \bibinfo{pages}{34661--34710}.
\bibitem[{Liu et~al.(2023)Liu, Desai, Liao, Wang, Xie, Xu, Kyrillidis, and
  Shrivastava}]{liu2024scissorhands}
\bibinfo{author}{Z.~Liu}, \bibinfo{author}{A.~Desai},
  \bibinfo{author}{F.~Liao}, \bibinfo{author}{W.~Wang},
  \bibinfo{author}{V.~Xie}, \bibinfo{author}{Z.~Xu},
  \bibinfo{author}{A.~Kyrillidis}, \bibinfo{author}{A.~Shrivastava},
\newblock \bibinfo{title}{Scissorhands: Exploiting the persistence of
  importance hypothesis for {LLM} {KV} cache compression at test time},
\newblock \bibinfo{journal}{Advances in Neural Information Processing Systems}
  \bibinfo{volume}{36} (\bibinfo{year}{2023}) \bibinfo{pages}{52342--52364}.
\bibitem[{Devoto et~al.(2024)Devoto, Zhao, Scardapane, and
  Minervini}]{devoto2024simple}
\bibinfo{author}{A.~Devoto}, \bibinfo{author}{Y.~Zhao},
  \bibinfo{author}{S.~Scardapane}, \bibinfo{author}{P.~Minervini},
\newblock \bibinfo{title}{A simple and effective {L2} norm-based strategy for
  {KV} cache compression},
\newblock \bibinfo{journal}{arXiv preprint arXiv:2406.11430}
  (\bibinfo{year}{2024}).
\bibitem[{Gong et~al.(2025)Gong, Ding, Wang, Lv, Zheng, Du, Yong, Gu, Qin, Guo,
  Lin, Magno, and Liu}]{gong2025lowbit}
\bibinfo{author}{R.~Gong}, \bibinfo{author}{Y.~Ding},
  \bibinfo{author}{Z.~Wang}, \bibinfo{author}{C.~Lv},
  \bibinfo{author}{X.~Zheng}, \bibinfo{author}{J.~Du},
  \bibinfo{author}{Y.~Yong}, \bibinfo{author}{S.~Gu}, \bibinfo{author}{H.~Qin},
  \bibinfo{author}{J.~Guo}, \bibinfo{author}{D.~Lin},
  \bibinfo{author}{M.~Magno}, \bibinfo{author}{X.~Liu},
\newblock \bibinfo{title}{A survey of low-bit large language models: Basics,
  systems, and algorithms},
\newblock \bibinfo{journal}{Neural Networks} \bibinfo{volume}{192}
  (\bibinfo{year}{2025}) \bibinfo{pages}{107856}.
\bibitem[{Ainslie et~al.(2023)Ainslie, Lee-Thorp, de~Jong, Zemlyanskiy,
  Lebr{\'o}n, and Sanghai}]{ainslie2023gqa}
\bibinfo{author}{J.~Ainslie}, \bibinfo{author}{J.~Lee-Thorp},
  \bibinfo{author}{M.~de~Jong}, \bibinfo{author}{Y.~Zemlyanskiy},
  \bibinfo{author}{F.~Lebr{\'o}n}, \bibinfo{author}{S.~Sanghai},
\newblock \bibinfo{title}{{GQA}: Training generalized multi-query transformers
  from multi-head checkpoints},
\newblock in: \bibinfo{booktitle}{Proceedings of the 2023 Conference on
  Empirical Methods in Natural Language Processing}, \bibinfo{year}{2023}.
\bibitem[{Kwon et~al.(2023)Kwon, Li, Zhuang, Sheng, Zheng, Yu, Gonzalez, Zhang,
  and Stoica}]{kwon2023efficient}
\bibinfo{author}{W.~Kwon}, \bibinfo{author}{Z.~Li},
  \bibinfo{author}{S.~Zhuang}, \bibinfo{author}{Y.~Sheng},
  \bibinfo{author}{L.~Zheng}, \bibinfo{author}{C.~H. Yu},
  \bibinfo{author}{J.~Gonzalez}, \bibinfo{author}{H.~Zhang},
  \bibinfo{author}{I.~Stoica},
\newblock \bibinfo{title}{Efficient memory management for large language model
  serving with {PagedAttention}},
\newblock in: \bibinfo{booktitle}{Proceedings of the 29th Symposium on
  Operating Systems Principles}, \bibinfo{year}{2023}.
\bibitem[{Zellers et~al.(2019)Zellers, Holtzman, Bisk, Farhadi, and
  Choi}]{zellers2019hellaswag}
\bibinfo{author}{R.~Zellers}, \bibinfo{author}{A.~Holtzman},
  \bibinfo{author}{Y.~Bisk}, \bibinfo{author}{A.~Farhadi},
  \bibinfo{author}{Y.~Choi},
\newblock \bibinfo{title}{{HellaSwag}: Can a machine really finish your
  sentence?},
\newblock in: \bibinfo{booktitle}{Proceedings of the 57th Annual Meeting of the
  Association for Computational Linguistics}, \bibinfo{year}{2019}, pp.
  \bibinfo{pages}{4791--4800}.
\bibitem[{Xiao et~al.(2024)Xiao, Tian, Chen, Han, and
  Lewis}]{xiao2024efficient}
\bibinfo{author}{G.~Xiao}, \bibinfo{author}{Y.~Tian},
  \bibinfo{author}{B.~Chen}, \bibinfo{author}{S.~Han},
  \bibinfo{author}{M.~Lewis},
\newblock \bibinfo{title}{Efficient streaming language models with attention
  sinks},
\newblock \bibinfo{journal}{arXiv preprint arXiv:2309.17453}
  (\bibinfo{year}{2024}).
\bibitem[{Li et~al.(2024)Li, Huang, Yang, Venkitesh, Locatelli, Ye, Cai, Lewis,
  and Chen}]{li2024snapkv}
\bibinfo{author}{Y.~Li}, \bibinfo{author}{Y.~Huang}, \bibinfo{author}{B.~Yang},
  \bibinfo{author}{B.~Venkitesh}, \bibinfo{author}{A.~Locatelli},
  \bibinfo{author}{H.~Ye}, \bibinfo{author}{T.~Cai},
  \bibinfo{author}{P.~Lewis}, \bibinfo{author}{D.~Chen},
\newblock \bibinfo{title}{{SnapKV}: {LLM} knows what you are looking for before
  generation},
\newblock in: \bibinfo{booktitle}{Advances in Neural Information Processing
  Systems}, volume~\bibinfo{volume}{37}, \bibinfo{year}{2024}.
\bibitem[{Oren et~al.(2024)Oren, Hassid, Adi, and
  Schwartz}]{oren2024transformers}
\bibinfo{author}{M.~Oren}, \bibinfo{author}{M.~Hassid},
  \bibinfo{author}{Y.~Adi}, \bibinfo{author}{R.~Schwartz},
\newblock \bibinfo{title}{Transformers are multi-state {RNNs}},
\newblock in: \bibinfo{booktitle}{Proceedings of the 2024 Conference on
  Empirical Methods in Natural Language Processing}, \bibinfo{year}{2024}.
\bibitem[{Zhang et~al.(2024)Zhang, Yang et~al.}]{zhang2024pyramidkv}
\bibinfo{author}{Z.~Zhang}, \bibinfo{author}{Z.~Yang}, et~al.,
\newblock \bibinfo{title}{{PyramidKV}: Dynamic {KV} cache compression based on
  pyramidal information funneling},
\newblock \bibinfo{journal}{arXiv preprint arXiv:2406.02069}
  (\bibinfo{year}{2024}).
\bibitem[{Yang et~al.(2024)Yang, Han, Gao, Hu, Zhang, and
  Zhao}]{yang2024pyramidinfer}
\bibinfo{author}{D.~Yang}, \bibinfo{author}{X.~Han}, \bibinfo{author}{Y.~Gao},
  \bibinfo{author}{Y.~Hu}, \bibinfo{author}{S.~Zhang},
  \bibinfo{author}{H.~Zhao},
\newblock \bibinfo{title}{{PyramidInfer}: Pyramid {KV} cache compression for
  high-throughput {LLM} inference},
\newblock in: \bibinfo{booktitle}{Findings of the Association for Computational
  Linguistics: ACL 2024}, \bibinfo{year}{2024}.
\bibitem[{Tang et~al.(2024)Tang, Zhao, Zhu, Xiao, Kasikci, and
  Han}]{chen2024quest}
\bibinfo{author}{J.~Tang}, \bibinfo{author}{Y.~Zhao}, \bibinfo{author}{K.~Zhu},
  \bibinfo{author}{G.~Xiao}, \bibinfo{author}{B.~Kasikci},
  \bibinfo{author}{S.~Han},
\newblock \bibinfo{title}{Quest: Query-aware sparsity for efficient
  long-context {LLM} inference},
\newblock \bibinfo{journal}{arXiv preprint arXiv:2406.10774}
  (\bibinfo{year}{2024}).
\bibitem[{Ge et~al.(2024)Ge, Zhang, Liu, Zhang, Han, and Gao}]{ge2024model}
\bibinfo{author}{S.~Ge}, \bibinfo{author}{Y.~Zhang}, \bibinfo{author}{L.~Liu},
  \bibinfo{author}{M.~Zhang}, \bibinfo{author}{J.~Han},
  \bibinfo{author}{J.~Gao},
\newblock \bibinfo{title}{Model tells you what to discard: Adaptive {KV} cache
  compression for {LLMs}},
\newblock in: \bibinfo{booktitle}{International Conference on Learning
  Representations}, \bibinfo{year}{2024}.
\bibitem[{Tang et~al.(2025)Tang, Lin, Lin, Han, Hong, Yao, and
  Wang}]{tang2025razorattention}
\bibinfo{author}{H.~Tang}, \bibinfo{author}{Y.~Lin}, \bibinfo{author}{J.~Lin},
  \bibinfo{author}{Q.~Han}, \bibinfo{author}{S.~Hong},
  \bibinfo{author}{Y.~Yao}, \bibinfo{author}{G.~Wang},
\newblock \bibinfo{title}{{RazorAttention}: Efficient {KV} cache compression
  through retrieval heads},
\newblock in: \bibinfo{booktitle}{International Conference on Learning
  Representations}, \bibinfo{year}{2025}.
\bibitem[{Yuan et~al.(2024)Yuan, Lv, Zhou et~al.}]{yuan2024adakv}
\bibinfo{author}{F.~Yuan}, \bibinfo{author}{J.~Lv}, \bibinfo{author}{J.~Zhou},
  et~al.,
\newblock \bibinfo{title}{Ada-{KV}: Optimizing {KV} cache eviction by adaptive
  budget allocation for efficient {LLM} inference},
\newblock \bibinfo{journal}{arXiv preprint arXiv:2407.11550}
  (\bibinfo{year}{2024}).
\bibitem[{Liu et~al.(2024)Liu, Yuan, Jin, Zhong, Xu, Braverman, Chen, and
  Hu}]{liu2024kivi}
\bibinfo{author}{Z.~Liu}, \bibinfo{author}{J.~Yuan}, \bibinfo{author}{H.~Jin},
  \bibinfo{author}{S.~Zhong}, \bibinfo{author}{Z.~Xu},
  \bibinfo{author}{V.~Braverman}, \bibinfo{author}{B.~Chen},
  \bibinfo{author}{X.~Hu},
\newblock \bibinfo{title}{{KIVI}: A tuning-free asymmetric 2bit quantization
  for {KV} cache},
\newblock in: \bibinfo{booktitle}{International Conference on Machine
  Learning}, \bibinfo{year}{2024}.
\bibitem[{Hooper et~al.(2024)Hooper, Kim, Mohber, Wattanawong, Mahoney, Shao,
  Keutzer, and Gholami}]{hooper2024kvquant}
\bibinfo{author}{C.~Hooper}, \bibinfo{author}{S.~Kim},
  \bibinfo{author}{H.~Mohber}, \bibinfo{author}{T.~Wattanawong},
  \bibinfo{author}{M.~W. Mahoney}, \bibinfo{author}{Y.~S. Shao},
  \bibinfo{author}{K.~Keutzer}, \bibinfo{author}{A.~Gholami},
\newblock \bibinfo{title}{{KVQuant}: Towards 10 million context length {LLM}
  inference with {KV} cache quantization},
\newblock in: \bibinfo{booktitle}{Advances in Neural Information Processing
  Systems}, volume~\bibinfo{volume}{37}, \bibinfo{year}{2024}.
\bibitem[{Kang et~al.(2024)Kang, Zhang, Kundu, Jeong, Liu, Krishna, and
  Zhao}]{kang2024gear}
\bibinfo{author}{H.~Kang}, \bibinfo{author}{Q.~Zhang},
  \bibinfo{author}{S.~Kundu}, \bibinfo{author}{G.~Jeong},
  \bibinfo{author}{Z.~Liu}, \bibinfo{author}{T.~Krishna},
  \bibinfo{author}{T.~Zhao},
\newblock \bibinfo{title}{{GEAR}: An efficient {KV} cache compression recipe
  for near-lossless generative inference of {LLM}},
\newblock \bibinfo{journal}{arXiv preprint arXiv:2403.05527}
  (\bibinfo{year}{2024}).
\bibitem[{Zhang et~al.(2024)Zhang, Yi, Xu, and
  Shrivastava}]{zhang2024kvcache1bit}
\bibinfo{author}{T.~Zhang}, \bibinfo{author}{J.~Yi}, \bibinfo{author}{Z.~Xu},
  \bibinfo{author}{A.~Shrivastava},
\newblock \bibinfo{title}{{KV} cache is 1 bit per channel: Efficient large
  language model inference with coupled quantization},
\newblock in: \bibinfo{booktitle}{Advances in Neural Information Processing
  Systems}, volume~\bibinfo{volume}{37}, \bibinfo{year}{2024}.
\bibitem[{Wang et~al.(2025)Wang, Han, Guo, He, Gao, and Lu}]{wang2025mpoq}
\bibinfo{author}{J.-Q. Wang}, \bibinfo{author}{X.-Q. Han},
  \bibinfo{author}{P.-J. Guo}, \bibinfo{author}{R.-Q. He},
  \bibinfo{author}{Z.-F. Gao}, \bibinfo{author}{Z.-Y. Lu},
\newblock \bibinfo{title}{Enabling efficient low-bit quantization based on
  matrix product operators for {KV} cache compression},
\newblock \bibinfo{journal}{Neural Networks} \bibinfo{volume}{197}
  (\bibinfo{year}{2025}) \bibinfo{pages}{108467}.
\bibitem[{Chang et~al.(2025)Chang, Lin, Lin, Chen, Hu, Wang, Huang, Ceze,
  Abdelfattah, and Wu}]{chang2025palu}
\bibinfo{author}{C.-C. Chang}, \bibinfo{author}{W.-C. Lin},
  \bibinfo{author}{C.-Y. Lin}, \bibinfo{author}{C.-Y. Chen},
  \bibinfo{author}{Y.-F. Hu}, \bibinfo{author}{P.-H. Wang},
  \bibinfo{author}{N.-C. Huang}, \bibinfo{author}{L.~Ceze},
  \bibinfo{author}{M.~S. Abdelfattah}, \bibinfo{author}{K.-C. Wu},
\newblock \bibinfo{title}{Palu: Compressing {KV}-cache with low-rank
  projection},
\newblock in: \bibinfo{booktitle}{International Conference on Learning
  Representations}, \bibinfo{year}{2025}.
\bibitem[{Singhania et~al.(2024)Singhania, Singh, He, Feizi, and
  Bhatele}]{singhania2024loki}
\bibinfo{author}{P.~Singhania}, \bibinfo{author}{S.~Singh},
  \bibinfo{author}{S.~He}, \bibinfo{author}{S.~Feizi},
  \bibinfo{author}{A.~Bhatele},
\newblock \bibinfo{title}{Loki: Low-rank keys for efficient sparse attention},
\newblock in: \bibinfo{booktitle}{Advances in Neural Information Processing
  Systems}, volume~\bibinfo{volume}{37}, \bibinfo{year}{2024}.
\bibitem[{Saxena et~al.(2024)Saxena, Saha, Choudhary, and
  Roy}]{saxena2024eigen}
\bibinfo{author}{U.~Saxena}, \bibinfo{author}{G.~Saha},
  \bibinfo{author}{S.~Choudhary}, \bibinfo{author}{K.~Roy},
\newblock \bibinfo{title}{Eigen attention: Attention in low-rank space for {KV}
  cache compression},
\newblock in: \bibinfo{booktitle}{Findings of the Association for Computational
  Linguistics: EMNLP 2024}, \bibinfo{year}{2024}.
\bibitem[{Shazeer(2019)}]{shazeer2019fast}
\bibinfo{author}{N.~Shazeer},
\newblock \bibinfo{title}{Fast transformer decoding: One write-head is all you
  need},
\newblock \bibinfo{journal}{arXiv preprint arXiv:1911.02150}
  (\bibinfo{year}{2019}).
\bibitem[{Brandon et~al.(2024)Brandon, Mishra, Nrusimha, Panda, and
  Kelly}]{zhang2024batchedgeneralization}
\bibinfo{author}{W.~Brandon}, \bibinfo{author}{M.~Mishra},
  \bibinfo{author}{A.~Nrusimha}, \bibinfo{author}{R.~Panda},
  \bibinfo{author}{J.~R. Kelly},
\newblock \bibinfo{title}{Reducing transformer key-value cache size with
  cross-layer attention},
\newblock \bibinfo{journal}{arXiv preprint arXiv:2405.12981}
  (\bibinfo{year}{2024}).
\bibitem[{Wu and Tu(2024)}]{wu2024layer}
\bibinfo{author}{H.~Wu}, \bibinfo{author}{K.~Tu},
\newblock \bibinfo{title}{Layer-condensed {KV} cache for efficient inference of
  large language models},
\newblock in: \bibinfo{booktitle}{Proceedings of the 62nd Annual Meeting of the
  Association for Computational Linguistics}, \bibinfo{year}{2024}.
\bibitem[{Sun et~al.(2024)Sun, Dong, Zhu, Huang, Wang, Ma, Zhang, Wang, and
  Wei}]{sun2024yoco}
\bibinfo{author}{Y.~Sun}, \bibinfo{author}{L.~Dong}, \bibinfo{author}{Y.~Zhu},
  \bibinfo{author}{S.~Huang}, \bibinfo{author}{W.~Wang},
  \bibinfo{author}{S.~Ma}, \bibinfo{author}{Q.~Zhang},
  \bibinfo{author}{J.~Wang}, \bibinfo{author}{F.~Wei},
\newblock \bibinfo{title}{You only cache once: Decoder-decoder architectures
  for language models},
\newblock in: \bibinfo{booktitle}{Advances in Neural Information Processing
  Systems}, volume~\bibinfo{volume}{37}, \bibinfo{year}{2024}.
\bibitem[{{DeepSeek-AI}(2024)}]{deepseekv2}
\bibinfo{author}{{DeepSeek-AI}},
\newblock \bibinfo{title}{{DeepSeek-V2}: A strong, economical, and efficient
  mixture-of-experts language model},
\newblock \bibinfo{journal}{arXiv preprint arXiv:2405.04434}
  (\bibinfo{year}{2024}).
\bibitem[{Liu et~al.(2024)Liu, Liu et~al.}]{liu2024minicache}
\bibinfo{author}{A.~Liu}, \bibinfo{author}{J.~Liu}, et~al.,
\newblock \bibinfo{title}{{MiniCache}: {KV} cache compression in depth
  dimension for large language models},
\newblock in: \bibinfo{booktitle}{Advances in Neural Information Processing
  Systems}, volume~\bibinfo{volume}{37}, \bibinfo{year}{2024}.
\bibitem[{Jiang et~al.(2025)Jiang, Gao, Zarch, and Annavaram}]{kvpr2025}
\bibinfo{author}{C.~Jiang}, \bibinfo{author}{L.~Gao}, \bibinfo{author}{H.~E.
  Zarch}, \bibinfo{author}{M.~Annavaram},
\newblock \bibinfo{title}{{KVPR}: Efficient {LLM} inference with {I/O}-aware
  {KV} cache partial recomputation},
\newblock \bibinfo{journal}{arXiv preprint arXiv:2411.17089}
  (\bibinfo{year}{2025}).
\bibitem[{Lee et~al.(2025)Lee, Kim, Hwang, Heo, Noh, and Huh}]{hybridserve2025}
\bibinfo{author}{S.~Lee}, \bibinfo{author}{H.~Kim}, \bibinfo{author}{S.~Hwang},
  \bibinfo{author}{G.~Heo}, \bibinfo{author}{M.~Noh}, \bibinfo{author}{J.~Huh},
\newblock \bibinfo{title}{Efficient {LLM} inference with activation
  checkpointing and hybrid caching},
\newblock \bibinfo{journal}{arXiv preprint arXiv:2501.01792}
  (\bibinfo{year}{2025}).
\bibitem[{Dao et~al.(2022)Dao, Fu, Ermon, Rudra, and
  R{\'e}}]{dao2022flashattention}
\bibinfo{author}{T.~Dao}, \bibinfo{author}{D.~Fu}, \bibinfo{author}{S.~Ermon},
  \bibinfo{author}{A.~Rudra}, \bibinfo{author}{C.~R{\'e}},
\newblock \bibinfo{title}{{FlashAttention}: Fast and memory-efficient exact
  attention with {IO}-awareness} \bibinfo{volume}{35} (\bibinfo{year}{2022}).
\bibitem[{Dao(2023)}]{dao2023flashattention2}
\bibinfo{author}{T.~Dao},
\newblock \bibinfo{title}{{FlashAttention-2}: Faster attention with better
  parallelism and work partitioning},
\newblock \bibinfo{journal}{arXiv preprint arXiv:2307.08691}
  (\bibinfo{year}{2023}).
\bibitem[{Chen et~al.(2016)Chen, Xu, Zhang, and Guestrin}]{chen2016training}
\bibinfo{author}{T.~Chen}, \bibinfo{author}{B.~Xu}, \bibinfo{author}{C.~Zhang},
  \bibinfo{author}{C.~Guestrin},
\newblock \bibinfo{title}{Training deep nets with sublinear memory cost},
\newblock \bibinfo{journal}{arXiv preprint arXiv:1604.06174}
  (\bibinfo{year}{2016}).
\bibitem[{Elhage et~al.(2021)Elhage, Nanda, Olsson, Henighan, Joseph, Mann,
  Askell, Bai, Chen, Conerly et~al.}]{elhage2021mathematical}
\bibinfo{author}{N.~Elhage}, \bibinfo{author}{N.~Nanda},
  \bibinfo{author}{C.~Olsson}, \bibinfo{author}{T.~Henighan},
  \bibinfo{author}{N.~Joseph}, \bibinfo{author}{B.~Mann},
  \bibinfo{author}{A.~Askell}, \bibinfo{author}{Y.~Bai},
  \bibinfo{author}{A.~Chen}, \bibinfo{author}{T.~Conerly}, et~al.,
\newblock \bibinfo{title}{A mathematical framework for transformer circuits},
\newblock \bibinfo{journal}{Transformer Circuits Thread}
  (\bibinfo{year}{2021}).
\bibitem[{Olsson et~al.(2022)Olsson, Elhage, Nanda, Joseph, DasSarma, Henighan,
  Mann, Askell, Bai, Chen et~al.}]{olsson2022context}
\bibinfo{author}{C.~Olsson}, \bibinfo{author}{N.~Elhage},
  \bibinfo{author}{N.~Nanda}, \bibinfo{author}{N.~Joseph},
  \bibinfo{author}{N.~DasSarma}, \bibinfo{author}{T.~Henighan},
  \bibinfo{author}{B.~Mann}, \bibinfo{author}{A.~Askell},
  \bibinfo{author}{Y.~Bai}, \bibinfo{author}{A.~Chen}, et~al.,
\newblock \bibinfo{title}{In-context learning and induction heads},
\newblock \bibinfo{journal}{Transformer Circuits Thread}
  (\bibinfo{year}{2022}).
\bibitem[{Shai et~al.(2024)Shai, Marzen, Teixeira, Oldenziel, and
  Riechers}]{shai2024transformers}
\bibinfo{author}{A.~Shai}, \bibinfo{author}{S.~Marzen},
  \bibinfo{author}{L.~Teixeira}, \bibinfo{author}{A.~G. Oldenziel},
  \bibinfo{author}{P.~M. Riechers},
\newblock \bibinfo{title}{Transformers represent belief state geometry in their
  residual stream},
\newblock in: \bibinfo{booktitle}{Advances in Neural Information Processing
  Systems}, volume~\bibinfo{volume}{37}, \bibinfo{year}{2024}.
\bibitem[{He et~al.(2024)He, Sun, Shen, and Li}]{he2024what}
\bibinfo{author}{S.~He}, \bibinfo{author}{G.~Sun}, \bibinfo{author}{Z.~Shen},
  \bibinfo{author}{A.~Li},
\newblock \bibinfo{title}{What matters in transformers? not all attention is
  needed},
\newblock \bibinfo{journal}{arXiv preprint arXiv:2406.15786}
  (\bibinfo{year}{2024}).
\bibitem[{Geiger et~al.(2021)Geiger, Lu, Icard, and Potts}]{geiger2021causal}
\bibinfo{author}{A.~Geiger}, \bibinfo{author}{H.~Lu},
  \bibinfo{author}{T.~Icard}, \bibinfo{author}{C.~Potts},
\newblock \bibinfo{title}{Causal abstractions of neural networks},
\newblock \bibinfo{journal}{Advances in Neural Information Processing Systems}
  \bibinfo{volume}{34} (\bibinfo{year}{2021}) \bibinfo{pages}{9574--9586}.
\bibitem[{Conmy et~al.(2023)Conmy, Mavor-Parker, Lynch, Heimersheim, and
  Garriga-Alonso}]{conmy2023towards}
\bibinfo{author}{A.~Conmy}, \bibinfo{author}{A.~N. Mavor-Parker},
  \bibinfo{author}{A.~Lynch}, \bibinfo{author}{S.~Heimersheim},
  \bibinfo{author}{A.~Garriga-Alonso},
\newblock \bibinfo{title}{Towards automated circuit discovery for mechanistic
  interpretability},
\newblock \bibinfo{journal}{Advances in Neural Information Processing Systems}
  \bibinfo{volume}{36} (\bibinfo{year}{2023}) \bibinfo{pages}{16318--16352}.
\bibitem[{Vaswani et~al.(2017)Vaswani, Shazeer, Parmar, Uszkoreit, Jones,
  Gomez, Kaiser, and Polosukhin}]{vaswani2017attention}
\bibinfo{author}{A.~Vaswani}, \bibinfo{author}{N.~Shazeer},
  \bibinfo{author}{N.~Parmar}, \bibinfo{author}{J.~Uszkoreit},
  \bibinfo{author}{L.~Jones}, \bibinfo{author}{A.~N. Gomez},
  \bibinfo{author}{{\L}.~Kaiser}, \bibinfo{author}{I.~Polosukhin},
\newblock \bibinfo{title}{Attention is all you need},
\newblock in: \bibinfo{booktitle}{Advances in Neural Information Processing
  Systems}, volume~\bibinfo{volume}{30}, \bibinfo{year}{2017}.
\bibitem[{He et~al.(2016)He, Zhang, Ren, and Sun}]{he2016deep}
\bibinfo{author}{K.~He}, \bibinfo{author}{X.~Zhang}, \bibinfo{author}{S.~Ren},
  \bibinfo{author}{J.~Sun},
\newblock \bibinfo{title}{Deep residual learning for image recognition}
  (\bibinfo{year}{2016}) \bibinfo{pages}{770--778}.
\bibitem[{Su et~al.(2024)Su, Ahmed, Lu, Pan, Bo, and Liu}]{su2024roformer}
\bibinfo{author}{J.~Su}, \bibinfo{author}{M.~Ahmed}, \bibinfo{author}{Y.~Lu},
  \bibinfo{author}{S.~Pan}, \bibinfo{author}{W.~Bo}, \bibinfo{author}{Y.~Liu},
\newblock \bibinfo{title}{{RoFormer}: Enhanced transformer with rotary position
  embedding},
\newblock \bibinfo{journal}{Neurocomputing} \bibinfo{volume}{568}
  (\bibinfo{year}{2024}) \bibinfo{pages}{127063}.
\bibitem[{Allal et~al.(2025)Allal, Lozhkov, Penedo, Wolf, and von
  Werra}]{allal2025smollm2}
\bibinfo{author}{L.~B. Allal}, \bibinfo{author}{A.~Lozhkov},
  \bibinfo{author}{G.~Penedo}, \bibinfo{author}{T.~Wolf},
  \bibinfo{author}{L.~von Werra},
\newblock \bibinfo{title}{{SmolLM2}: When smol goes big -- data-centric
  training of a small language model},
\newblock \bibinfo{journal}{arXiv preprint arXiv:2502.02737}
  (\bibinfo{year}{2025}).
\bibitem[{Team(2025)}]{qwen2025qwen25}
\bibinfo{author}{Q.~Team},
\newblock \bibinfo{title}{Qwen2.5 technical report},
\newblock \bibinfo{journal}{arXiv preprint arXiv:2412.15115}
  (\bibinfo{year}{2025}).
\bibitem[{Guo et~al.(2025)Guo, Yang, Zhang, Song, Zhang, Xu, Zhu, Ma, Wang, Bi
  et~al.}]{guo2025deepseek}
\bibinfo{author}{D.~Guo}, \bibinfo{author}{D.~Yang},
  \bibinfo{author}{H.~Zhang}, \bibinfo{author}{J.~Song},
  \bibinfo{author}{R.~Zhang}, \bibinfo{author}{R.~Xu},
  \bibinfo{author}{Q.~Zhu}, \bibinfo{author}{S.~Ma}, \bibinfo{author}{P.~Wang},
  \bibinfo{author}{X.~Bi}, et~al.,
\newblock \bibinfo{title}{Deepseek-r1: Incentivizing reasoning capability in
  llms via reinforcement learning},
\newblock \bibinfo{journal}{arXiv preprint arXiv:2501.12948}
  (\bibinfo{year}{2025}).
\bibitem[{{Gemma Team}(2025)}]{team2024gemma}
\bibinfo{author}{{Gemma Team}},
\newblock \bibinfo{title}{Gemma 3 technical report},
\newblock \bibinfo{journal}{arXiv preprint arXiv:2503.19786}
  (\bibinfo{year}{2025}).
\bibitem[{Hay(2025)}]{hay2025kvdirect}
\bibinfo{author}{C.~Hay}, \bibinfo{title}{We don't need {KV} cache anymore?
  {KV-Direct}: Bounded-memory inference via residual checkpointing},
  \bibinfo{howpublished}{\url{https://github.com/chrishayuk/chuk-lazarus}},
  \bibinfo{year}{2025}.

\end{thebibliography}

\appendix

\section{Experimental Details}
\label{app:details}

\paragraph{Hardware.}
All experiments were run on an Apple M3~Max with 64~GB unified memory. We use the MLX framework for model loading and inference, with bfloat16 precision throughout.

\paragraph{Models.}
\begin{itemize}[leftmargin=2em,topsep=2pt,itemsep=1pt]
  \item \textbf{SmolLM2-135M-Instruct}~\citep{allal2025smollm2}: LLaMA-family, 30 layers, $d_{\text{hidden}}=576$, 9 query heads, 3 KV heads, $d_{\text{head}}=64$.
  \item \textbf{Qwen2.5-0.5B-Instruct}~\citep{qwen2025qwen25}: Qwen2, 24 layers, $d_{\text{hidden}}=896$, 14 query heads, 2 KV heads, $d_{\text{head}}=64$. 4-bit quantised.
  \item \textbf{Qwen3-0.6B-Base}: Qwen3, 28 layers, $d_{\text{hidden}}=1024$, 16 query heads, 8 KV heads, $d_{\text{head}}=128$. Full precision.
  \item \textbf{DeepSeek-R1-Distill-Qwen-1.5B}~\citep{guo2025deepseek}: Qwen2 architecture (reasoning-distilled), 28 layers, $d_{\text{hidden}}=1536$, 12 query heads, 2 KV heads, $d_{\text{head}}=128$.
  \item \textbf{Qwen2.5-1.5B-Instruct}: Qwen2, 28 layers, $d_{\text{hidden}}=1536$, 12 query heads, 2 KV heads, $d_{\text{head}}=128$. 4-bit quantised.
  \item \textbf{Gemma 3-4B-IT}~\citep{team2024gemma}: Gemma3, 34 layers, $d_{\text{hidden}}=2560$, 8 query heads, 4 KV heads, $d_{\text{head}}=256$. 29/34 sliding window; 5 global. 4-bit quantised.
\end{itemize}
All models use pre-norm (RMSNorm) and RoPE. Experiments run on Apple M3~Max via MLX with bfloat16 precision.

\paragraph{Prompts.}
For Experiment~1 (KV reconstruction): ``The residual stream in a transformer is the central information highway. All attention and MLP outputs are additive updates to it.'' (24 tokens after tokenization.)

For Experiment~2 (generation match): ``Explain why the sky is blue in simple terms.'' Greedy (argmax) decoding with no temperature or sampling.

For Experiment~3 (multi-turn): System prompt ``You are a helpful, concise AI assistant.'' followed by user turns about France, the Eiffel Tower, etc. 30 tokens generated per turn.

\paragraph{Rank computation.}
The bilinear form $\bM^{(h)} = \bW_q^{(h)} {\bW_k^{(h)}}^\top$ was computed in float32 to avoid precision loss. Singular values were obtained via \texttt{numpy.linalg.svd}. Effective rank was defined as the smallest $r$ such that $\sum_{i=1}^r \sigma_i^2 \geq 0.90 \cdot \sum_{i=1}^{d_{\text{head}}} \sigma_i^2$.

\section{Additional Analysis}
\label{app:additional_analysis}

This section presents two supplementary analyses that complement the main results. Figure~\ref{fig:budget_sweep_app} examines how token match degrades under window-only caching as the KV budget varies, confirming that \kvdirect maintains lossless reconstruction at all budget levels. Figure~\ref{fig:layer_patching_app} visualises the cross-task residual patching experiment across all layers and models, providing layer-granularity evidence for the Markov property established in Section~\ref{sec:markov}.

\begin{figure*}
\centering
\includegraphics[width=\linewidth]{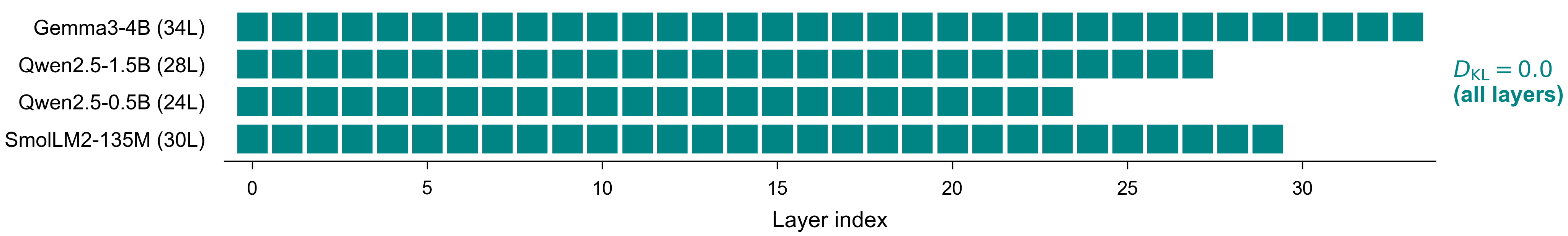}
\caption{Cross-task residual patching across all layers for four models. Each block represents one layer where the recipient's residual stream is replaced with the donor's. All tested layers produce $D_{\mathrm{KL}} = 0.0$ across all four architectures, confirming that the residual stream is a sufficient Markov state at every depth.}
\label{fig:layer_patching_app}
\end{figure*}

\begin{figure}
\centering
\includegraphics[width=\linewidth]{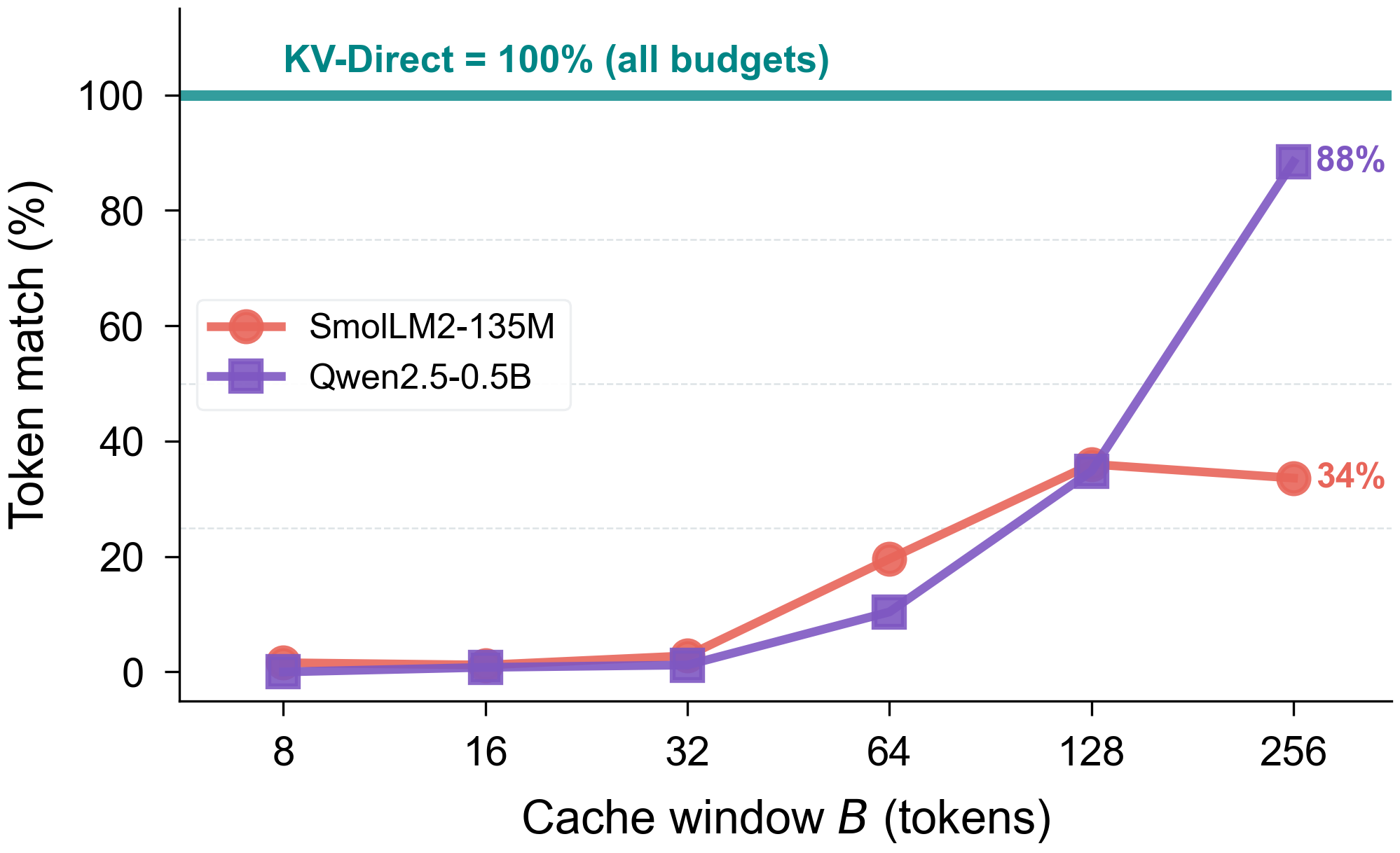}
\caption{KV budget sweep across six window sizes ($B \in \{8, 16, 32, 64, 128, 256\}$) with 250-token generation. (a) Window-only token match degrades sharply as the cache budget shrinks, while \kvdirect\ maintains 100\% match at all budgets. (b) Averaged across budgets, window-only caching achieves 16--23\% match versus \kvdirect's perfect recovery.}
\label{fig:budget_sweep_app}
\end{figure}

\end{document}